\documentclass{dynafront2025} 

\usepackage{xspace}
\usepackage{amsmath}
\usepackage{amssymb}
\usepackage{multirow}
\usepackage{graphicx}  
\usepackage[table,xcdraw]{xcolor}
\usepackage{booktabs}
\usepackage{adjustbox} 
\usepackage{arydshln} 
\usepackage{wrapfig}
\newcommand{\ours}{\textsc{CSGD}\xspace}


\title[]{Composable Score-based Graph Diffusion Model for Multi-Conditional Molecular Generation}


\optauthor{%
\Name{Anjie Qiao} \Email{qiaoanj@mail2.sysu.edu.cn}\\
\Name{Zhen Wang$^*$} \Email{wangzh665@mail.sysu.edu.cn}\\
\Name{Chuan Chen} \Email{chenchuan@mail.sysu.edu.cn}\\
\addr $\text{1}.$ Sun Yat-sen University\\
\AND
\Name{DeFu Lian} \Email{liandefu@ustc.edu.cn}\\
\Name{Enhong Chen} \Email{cheneh@mail.ustc.edu.cn}\\
\addr $\text{2}.$ University of Science and Technology of China\\
}


\begin{document}

\maketitle

\begin{abstract}
Controllable molecular graph generation is essential for material and drug discovery, where generated molecules must satisfy diverse property constraints. While recent advances in graph diffusion models have improved generation quality, their effectiveness in multi-conditional settings remains limited due to reliance on joint conditioning or continuous relaxations that compromise fidelity. To address these limitations, we propose \textbf{C}omposable \textbf{S}core-based \textbf{G}raph \textbf{D}iffusion model (\ours), the first model that extends score matching to discrete graphs via concrete scores, enabling flexible and principled manipulation of conditional guidance. Building on this foundation, we introduce two score-based techniques: Composable Guidance (CoG), which allows fine-grained control over arbitrary subsets of conditions during sampling, and Probability Calibration (PC), which adjusts estimated transition probabilities to mitigate train–test mismatches. Empirical results on four molecular datasets show that \ours achieves state-of-the-art performance, with a 15.3\% average improvement in controllability over prior methods, while maintaining high validity and distributional fidelity. Our findings highlight the practical advantages of score-based modeling for discrete graph generation and its capacity for flexible, multi-property molecular design.
\end{abstract}


\section{Introduction}
\label{sec:intro}
Molecular graph generation is pivotal for material and drug discovery, enabling efficient exploration of vast chemical spaces~\cite{wang2025_into1,du2024machine_intro2,TNNLS_2/3Dgraph}. For real-world applications, a key challenge lies in controlling the generation process so that molecules satisfy multiple desired properties~\cite{gebauer2022_moleculesdesign,TNNLS_GraphCas}. For instance, one may wish to generate a set of candidate molecules that simultaneously meet specific ranges of synthetic scores and permeability to target gases.

Recent advances in graph diffusion models~\cite{vignac2022_digress,xu2024_disco,liu2024_graphdit} have significantly improved molecular generation quality. However, their ability in multi-conditional generation remains to be improved. Early approaches~\cite{bilodeau2022_earlywork,lee2023_MOOD} collapse all properties into a single joint condition (Figure~\ref{motivation}A), which obscures the influence of individual properties and hinders fine-grained control.
GraphDiT~\cite{liu2024_graphdit} mitigates this issue by designing separate encoders for categorical and numerical conditions (Figure~\ref{motivation}C), yet still treats the combination of encodings as a monolithic condition. This rigid design prevents dynamic inclusion or exclusion of conditions.

\begin{figure}[t]
    \centering
    \label{motivation}
    \includegraphics[width=1\textwidth]{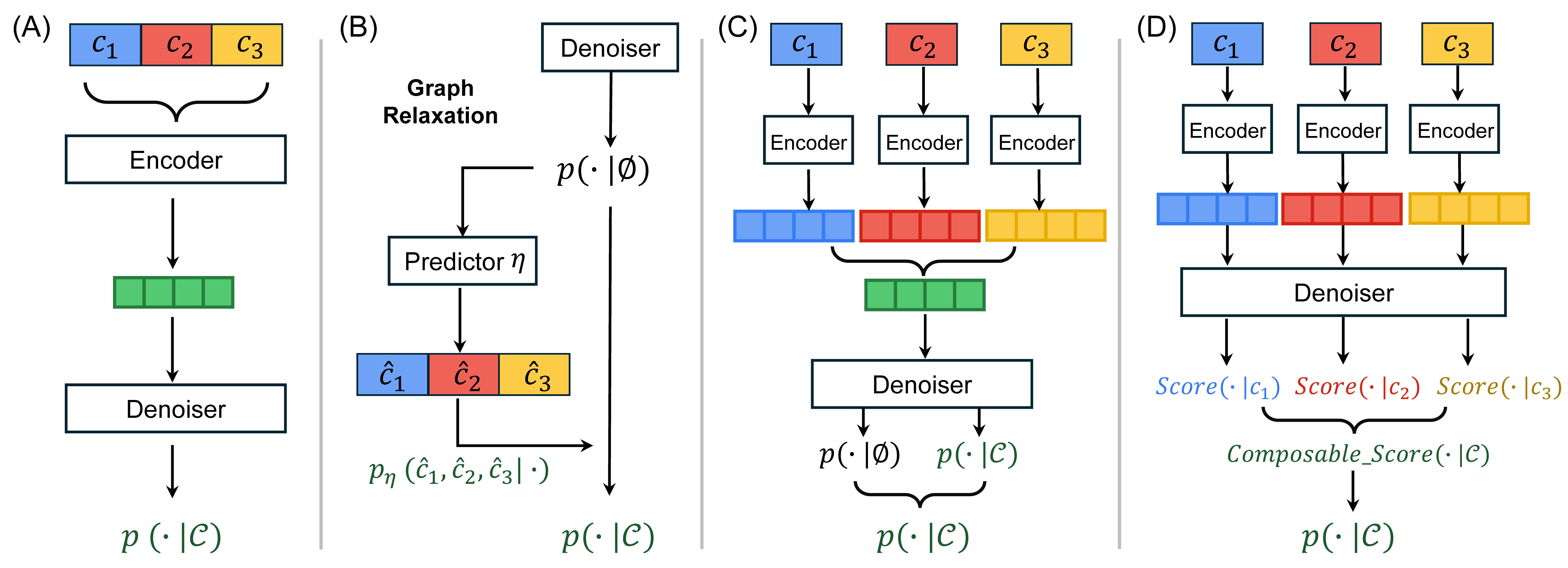}
    \caption{Comparison of multi-conditional guidance in discrete graph diffusion: (A) vanilla conditional generation; (B) predictor guidance in DiGress; (C) predictor-free guidance in GraphDiT; (D) our composable score-based guidance. For clarity, the sampling process and notation are simplified.}
\end{figure}

Therefore, a natural question arises: Is it viable to flexibly compose the guidance of each property considered for molecular graph generation?
In analogy to classifier guidance in continuous domain~\cite{dhariwal2021_classifiergui}, DiGress~\cite{vignac2022_digress} proposes predictor-guidance by combining the gradients from multiple property predictors (Figure~\ref{motivation}B).
However, this requires relaxing discrete graphs into continuous variables, potentially degrading generation fidelity.
Meanwhile, existing discrete graph diffusion models~\cite{vignac2022_digress,xu2024_disco,siraudin2024_cometh} typically adopt the mean prediction~\cite{austin2021structured_D3PM,campbell2022_absorb}, where the denoiser predicts the clean sample's distribution based on the current noisy 
and the given condition, rather than the score of the current marginal distribution.
This design makes score-based compositional guidance inapplicable~\cite{liu2022_composable}, and directly composing the estimated distributions of each considered condition lacks theoretical interpretation.

To overcome these limitations, we propose \textbf{C}omposable \textbf{S}core-based \textbf{G}raph \textbf{D}iffusion model (\ours), a novel model for flexible multi-conditional molecular generation. Built upon the recently proposed concrete score~\cite{lou2024_sedd}, we introduce concrete score-based diffusion model for discrete graphs, extending score matching to this domain. This design unlocks advanced score manipulation techniques. Specifically, we introduce \textit{Composable Guidance}, a novel mechanism that leverages the compositional nature of the score for multi-conditional generation (Figure~\ref{motivation}D). Then, we explore \textit{Probability Calibration}, manipulating conditional concrete scores to better shape the estimated transition probabilities within the generation process. With these techniques, \ours achieves flexible, property-aware sampling and significantly improves both validity and controllability.
In summary, our contributions are as follows:

\begin{itemize}
    \item To the best of our knowledge, we introduce the first concrete score-based graph diffusion model, which extends score matching to discrete graph generation and enabling advanced score-manipulation techniques that were previously inapplicable.
    \item We propose \textit{Composable Guidance} and \textit{Probability Calibration} for multi-conditional molecular generation, which manipulate per-property scores to steer generation toward satisfying desired properties, thereby improving the validity and controllability of the generated molecules.
    \item We compare \ours with various multi-conditional molecular generation methods (Figure~\ref{motivation}A-C) on four public datasets. \ours achieves state-of-the-art performance, with an average 15.3\% improvement in controllability, highlighting the effectiveness of score-based modeling and composable guidance.
\end{itemize}
\section{Related Works}
\label{sec:related}

\noindent\textbf{Graph Diffusion Models.}
Diffusion models~\cite{ho2020_DDPM,austin2021structured_D3PM,songscore_seds} have been widely applied to graph generation. Early approaches such as EDP-GNN~\cite{niu2020_EDPGNN} and GDSS~\cite{jo2022_GDSS} treated graphs as continuous variables, typically using a Gaussian reference distribution, similar to that in image generation. Because graphs are inherently discrete, subsequent work~\cite{vignac2022_digress,liu2024_graphdit,siraudin2024_cometh} modeled graphs in discrete state spaces to better capture their structural properties. Among them, DiGress~\cite{vignac2022_digress} demonstrated strong performance on various graph generation benchmarks and thus has been widely adopted in practice. However, these models generally operate in a discrete time setting, which limits the flexibility of sampling~\cite{xu2024_disco}. To address this, recent methods such as D\textsc{is}C\textsc{o}~\cite{xu2024_disco} and C\textsc{ometh}~\cite{siraudin2024_cometh} first introduce discrete-state, continuous-time diffusion models for graph generation. In this work, our proposed \ours also operates with continuous time and discrete state, while distinct from previous methods by introducing the concrete score~\cite{lou2024_sedd}, thereby unlocking score-manipulation techniques for guided sampling that would otherwise be unavailable for the existing mean prediction-based graph diffusion models.

\noindent\textbf{Guidance for diffusion model.}
Recent studies~\cite{hu2023_selfguidance,ahn2024_PAG,liu2022_composable} have shown that sampling guidance techniques, such as classifier guidance~\cite{dhariwal2021_classifiergui} and classifier-free guidance~\cite{ho2021_cfg}, are crucial for steering diffusion models toward samples that better respect satisfy condition(s). As classifier-free guidance has shown its superiority in many tasks, many recent works proposed improvements to it, including applying guidance only during the middle interval of sampling to improve inference speed and quality~\cite{kynkaanniemi2024_guidanceinterval}, guiding a high-quality model with a degraded, lower-quality model trained on the same task and data~\cite{karras2024_guidebyitself}, a first-principles nonlinear correction~\cite{zheng2024_characteristicguidance}, sliding window guidance~\cite{kaiser2024_slideguidance}, and independent condition guidance~\cite{sadatno_ICG}. These techniques have proven effective in improving the sample quality and condition obedience. Building on our introduction of concrete score, we further propose two score-manipulation techniques to improve the validity and controllability of discrete graph generation.
\section{Methodology}
\label{sec:method}
In this section, we present \ours, which tailors SEDD~\cite{lou2024_sedd} to the domain of multi-conditional molecular generation (see Section~\ref{subsec:score}). Leveraging the composable nature of the scores, \ours enables more flexible and controllable sampling strategies for generation tasks conditioning on multiple properties (see Section~\ref{subsec:composable}). In analogy to recently proposed tricks for  scores in the continuous domain, we propose Probability Calibration (PC) mechanism for \ours to improve the validity and controllability of the generated molecular graphs (see Section~\ref{subsec:calibration}).

\noindent\textbf{Notations and Problem Definition.}
We model graphs in a discrete space, where nodes and edges take categorical attributes, denoted by $\mathcal{X}=\{1,\ldots,a\}$ and $\mathcal{E}=\{1,\ldots,b\}$, respectively. A graph is represented as $G=(\mathbf{X},\mathbf{E})$, where $\mathbf{X}=(x^i)_{i=1}^{n} \in\mathcal{X}^n$ specifies each node's type (i.e., atom type) and $\mathbf{E} =(e^{ij})_{i,j=1}^{n}\in \mathcal{E}^{n \times n}$ specifies each edge's type (i.e., bond type or the indicator of no connection). Here, $n$ denotes the number of nodes.
Let $\mathcal{C}=\{c_1,c_2,\dots,c_M\}$ denote a set of $M$ specified conditions, where $c_i$ may be numerical or categorical.
The multi-conditional molecular generation is to model: $p(G|\mathcal{C})=p(G| c_1,\dots,c_M ) \propto p(G)p(c_1,\dots,c_M | G)$, 
where molecular graph $G$ can be evaluated along two dimensions: 
(1) Distribution Learning $p(G)$: measuring the fidelity to the underlying unconditional distribution; 
(2) Controllability $p(c_1,\dots,c_M | G)$: measuring consistency between generated molecular graph and the specified conditions.
Moreover, there are often practical scenarios in which we need to sample $G$ with only a subset of properties specified.
A flexible solution is demanded that can handle an arbitrary subset of properties based on one common model.

\subsection{Score-based Graph Diffusion Model}
\label{subsec:score}
For brevity, we present our diffusion model in an unconditional setting to focus on the use of the concrete score~\cite{meng2022_concretescore}. This unconditional formulation can be readily adapted to our target multi-conditional generation problem simply by feeding the encoded condition(s) into the denoiser.

\noindent\textbf{Diffusion Process.}
Following prior graph diffusion models~\cite{xu2024_disco}, we diffuse each node and each edge independently following a continuous-time Markov process, which can be described by linear ODEs~\cite{lou2024_sedd}:
\begin{align}
\frac{\mathrm{d}p_{t}^{X}}{\mathrm{d}t}= Q_t^Xp_{t}^{X},\quad\frac{\mathrm{d}p_{t}^{E}}{\mathrm{d}t}= Q_t^Ep_{t}^{E},
\label{eq:fwdpfode}
\end{align}
where $\forall t\in\Re_{+},p_{t}^{X}\in\Re^{a},p_{t}^{E}\in\Re^{b}$ denote the marginal discrete probability distributions, with boundary conditions $p_0 \approx p_{\text{data}}$ and $p_{t\to\infty}\approx p_{\text{base}}$.
With a scalar noise schedule $\sigma(t)$~\cite{ho2020_DDPM}, the time-dependent transition matrices are defined as $Q_t^X = \sigma(t) Q^X$ and $Q_t^E = \sigma(t) Q^E$, where $Q^X\in\mathbb{R}^{a\times a}$ and $Q^E\in\mathbb{R}^{b\times b}$ are the transition matrices for nodes and edges, respectively.
Due to this independence, the likelihood of a graph $G_t$ at timestep $t$ is factorized as $p_t(G_t)=\prod_{i=1}^{n}p_{t}^{X}(x_{t}^i)\prod_{i,j=1}^{n}p_{t}^{E}(e_{t}^{ij})$.
We adopt two common choices for the transition matrix $Q^X$ (resp. $Q^E$): \textit{Absorb}~\cite{campbell2022_absorb}, which introduces a dedicated absorbing (MASK) state, and \textit{Uniform}~\cite{austin2021structured_D3PM}, which assigns the same transition probabilities to all other states.

Then, the element-wise forward transition admits a closed form. Conditioned on the state at $t=0$, the probability is given by the cumulative transition matrix:
\begin{equation}
\begin{aligned}
p_{t|0}^{X}(\cdot | x^{i}_0 = x) &= x\text{-th column of}\exp{(\bar{\sigma}(t)Q^X)},\\ 
p_{t|0}^{E}(\cdot | e^{ij}_0 = e) &= e\text{-th column of}\exp{(\bar{\sigma}(t)Q^E)},
\end{aligned}
\label{eq:singlefwdt0}
\end{equation}
where $\bar{\sigma}(t) = \int_0^t \sigma(s) \,\mathrm{d}s$ is the cumulative noise coefficient.  
Thus, the likelihood of the entire graph $G_t$ conditioned on the observation $G_0$ is given by $
p_{t|0}(G_t | G_0)
= \prod_{i=1}^{n} p_{t|0}^{X}(x_{t}^{i} | x_{0}^i)
  \;\prod_{i,j=1}^{n} p_{t|0}^{E}(e_{t}^{ij} | e_{0}^{ij})$.

\noindent\textbf{Reverse Process.}
The reverse process aims to reconstruct graphs obeying $p_{\text{data}}$ from noisy ones sampled from $p_{\text{base}}$, which can be completed via simulating the reversal of Eq.~\eqref{eq:fwdpfode}, that is, $\frac{\mathrm{d}p_{T-t}^{X}}{\mathrm{d}t}= \bar{Q}_{T-t}^X p_{T-t}^{X}$ and $\frac{\mathrm{d}p_{T-t}^{E}}{\mathrm{d}t}= \bar{Q}_{T-t}^Ep_{T-t}^{E}$~\cite{kelly1981_Kelly,sunscore}.
Since the flux should be consistent, that is, $\bar{Q}_{t}^{X}(x',x)p_{t}^{X}(x) = Q_{t}^{X}(x,x')p_{t}^{X}(x')$, existing works~\cite{meng2022_concretescore} have proposed to estimate the ratio $\frac{p_{t}^{X}(x')}{p_{t}^{X}(x)}$, namely concrete score, and use these scores to perform the reverse process. Edges can be treated analogously via \(\frac{p_{t}^{E}(e')}{p_{t}^{E}(e)}\).

Due to the combinatorial nature of discrete graphs, a general transition matrix defining probabilities between all possible graph pairs is clearly intractable. Thanks to the independence design of our diffusion process, we restrict each transition to flip only a single position—either a node or an edge—at a time.
Under this design, the concrete scores can be naturally parameterized using a neural network $s_{\theta}(\cdot,\cdot)$:
\begin{equation}
\begin{aligned}
s_\theta^{X}(G_t, t)_{i,\tilde{x}_t^i} &\approx
 \frac{p_{t}^{X}(x_t^{1} \ldots \tilde{x}_t^i \ldots x_t^{n})}{p_{t}^{X}(x_t^1 \ldots x_t^{i} \ldots x_t^n)}\text{ for }1\leq i \leq n,\tilde{x}_{t}^{i}\neq x_{t}^{i},\\
s_\theta^{E}(G_t, t)_{ij,\tilde{e}_t^{ij}} &\approx \frac{p_{t}^{E}(e_t^{11} \ldots  \tilde{e}_t^{ij} \ldots e_t^{nn})}{p_{t}^{E}(e_t^{11} \ldots e_t^{ij} \ldots e_t^{nn})}\text{ for }1\leq i,j \leq n,\tilde{e}_{t}^{ij}\neq e_{t}^{ij}.
\end{aligned}
\end{equation}
Here, $x_{t}^i$ (resp. $e_{t}^{ij}$) denotes the current node (resp. edge) state and $\tilde{x}_{t}^i$ (resp. $\tilde{e}_{t}^{ij}$) are possible alternative states from the corresponding discrete space.
Specifically, $s_{\theta}(\cdot,\cdot)$ is often a Graph Transformer that processes both node tokens and edge tokens, where the subscripts $(i,\tilde{x}_{t}^{i})$ mean to take its $\tilde{x}_{t}^{i}$-th output dimension at the $i$-th node token, with edges treated analogously.

Following SEDD, we borrow the derivation of Tweedie denoising~\cite{efron2011_tweedie} yet perform $\tau$-leaping, with each token states $x_{s}^{i}$ and $e_{s}^{ij}$ at timestep $s=t-\Delta t, \Delta t \geq 0$ having their conditional probabilities as follows:
\begin{equation}
\begin{aligned}
p_{s|t}^{X}(x^i_{s}|x^i_{t} ; \theta) &= \left(\operatorname{exp}(-\sigma_t^{\Delta t}Q^X)s_{\theta}^X(G_t,t)_{i}\right)_{x^i_{s}}\operatorname{exp}(\sigma_t^{\Delta t}Q^X)(x^i_{t},x^i_{s}), 
\\
p_{s|t}^{E}(e^{ij}_{s}|e^{ij}_{t};\theta) &= \left(\operatorname{exp}(-\sigma_t^{\Delta t} Q^E)s_{\theta}^E(G_{t},t)_{ij}\right)_{e^{ij}_{s}}\operatorname{exp}(\sigma_t^{\Delta t})Q^E)(e^{ij}_{t},e^{ij}_{s}),
\label{eq:reversetransition}
\end{aligned}
\end{equation}
where $\sigma_t^{\Delta t} = \overline{\sigma}(t) - \overline{\sigma}(s)$.
Specifically, we perform Euler step for each token independently based on Eq.~\eqref{eq:reversetransition} yet with $\tau$-leaping. Essentially, the reverse step factorizes across tokens:
\begin{align}
\label{eq:prob}
p_{s|t}(G_{s}|G_t; \theta)=\prod_{ i=1}^n p_{s|t}^{X}(x^i_{s}|x^i_t ; \theta) \prod_{i,j =1}^n p_{s|t}(e^{ij}_{s}|e^{ij}_t ; \theta).
\end{align}

\noindent\textbf{Optimization.}
The denoiser $s_{\theta}$ is trained to predict concrete scores by minimizing the \textit{Diffusion Weighted Denoising Score Entropy}~\cite{lou2024_sedd}:
\begin{equation}
\begin{aligned}
\label{lossfunction}
\mathcal{L}_{t}^{X}(G_0,G_t,\theta) = \sigma(t) \sum_{i=1}^n \sum_{ \tilde{x}^i_t=1}^a (1 - \delta_{x^i_t}(\tilde{x_t}^i)) \left(s_{\theta}^X(G_t, t)_{i,\tilde{x}^i} - \frac{p_{t|0}(\tilde{x_t}^i|x^i_0)}{p_{t|0}(x^i_t|x^i_0)}\right) \log s_{\theta}^X(G_t, t)_{i,\tilde{x}^i},
\\
\mathcal{L}_{t}^{E}(G_0, G_t, \theta) = \sigma(t) \sum_{i,j=1}^n\sum_{ \tilde{e}_t^{ij}=1}^b (1 - \delta_{e^{ij}_t}(\tilde{e_t}^{ij})) \left(s_{\theta}^E(G_t, t)_{ij,\tilde{e}^{ij}} - \frac{p_{t|0}(\tilde{e_t}^{ij}|e^{ij}_0)}{p_{t|0}(e^{ij}_t|e^{ij}_0)}\right) \log s_{\theta}^E(G_t, t)_{ij,\tilde{e}^{ij}}.
\end{aligned}
\end{equation}
The Kronecker delta $\delta_{x_{t}^{i}}(\tilde{x}^i_t)$ and $\delta_{e_{t}^{ij}}(\tilde{e}^{ij}_t)$ ensures that the current state is excluded from the summation. The final loss combines node and edge terms with weight $\lambda$ as $
\mathcal{L}_{t}(G_0, G_t, \theta) = \mathcal{L}_{t}^{X}(G_0, G_t, \theta) + \lambda \mathcal{L}_{t}^{E}(G_0, G_t, \theta)$.
The score entropy loss is well-suited for recovering the ground-truth concrete scores~\cite{lou2024_sedd}.
We optimize it using the Adam~\cite{2015kingma_adam} optimizer, with gradient clipping and a learning rate warmup~\cite{he2016deep_warmup} to stabilize training.

\noindent\textbf{Analysis.} Suppose we discretize the reversal of Eq.~\ref{eq:fwdpfode} by timesteps $0 = t_0 < t_1 < \cdots < t_{T-1} < t_T = 1$ and simulate the process to sample $G_0$, the corresponding estimated distribution is $p(G;\theta)=\sum_{G_{t_1},\ldots,G_{t_T}}\prod_{k=1}^{T}p_{t_{k-1}|t_k}(G_{t_{k-1}}|G_{t_k};\theta)p_{\text{base}}(G_{t_T})$.
Since isomorphic molecular graphs are treated equally, the permutation-invariant $p(G;\theta)$ is preferred.

\begin{proposition}
The estimated distribution $p(G;\theta)$ is permutation-invariant.
\end{proposition}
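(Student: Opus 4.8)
The plan is to reduce the claim to three facts and then conclude by a change of variables in the finite sum defining $p(G;\theta)$: (i) the denoiser $s_\theta$ is permutation-equivariant; (ii) hence each single reverse step from Eq.~\eqref{eq:prob} is invariant, as a transition kernel, under a simultaneous relabeling of its two graph arguments; and (iii) $p_{\text{base}}$ is permutation-invariant. Throughout, for $\pi\in S_n$ let $\pi\cdot G$ denote the graph with node types $(\pi\cdot\mathbf X)^i = x^{\pi^{-1}(i)}$ and edge types $(\pi\cdot\mathbf E)^{ij} = e^{\pi^{-1}(i)\pi^{-1}(j)}$; note this action fixes the node count $n$, so all the intermediate states $G_{t_1},\dots,G_{t_T}$ share the same $n$ as $G$.

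First I would record the equivariance of the denoiser: since $s_\theta$ is a Graph Transformer acting on node and edge tokens with no node-identifying positional features, relabeling the input tokens by $\pi$ relabels the outputs accordingly, i.e. $s_\theta^X(\pi\cdot G_t,t)_{\pi(i),\tilde x} = s_\theta^X(G_t,t)_{i,\tilde x}$ and $s_\theta^E(\pi\cdot G_t,t)_{\pi(i)\pi(j),\tilde e} = s_\theta^E(G_t,t)_{ij,\tilde e}$. Next, from Eq.~\eqref{eq:reversetransition} the per-token reverse factor $p_{s|t}^X(x_s^i\mid x_t^i;\theta)$ depends on $G_t$ only through the token $s_\theta^X(G_t,t)_i$ and on the pair $(x_t^i,x_s^i)$, with the matrix exponentials $\exp(\pm\sigma_t^{\Delta t}Q^X)$ shared across all nodes (and likewise for edges). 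Combining this with the equivariance and reindexing the products in Eq.~\eqref{eq:prob} by $i\mapsto\pi^{-1}(i)$ and $(i,j)\mapsto(\pi^{-1}(i),\pi^{-1}(j))$ yields $p_{s|t}(\pi\cdot G_s\mid\pi\cdot G_t;\theta) = p_{s|t}(G_s\mid G_t;\theta)$. Finally, $p_{\text{base}}$ factorizes into identical node and edge marginals that do not depend on position, so $p_{\text{base}}(\pi\cdot G) = p_{\text{base}}(G)$.

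To assemble, I would evaluate $p(G;\theta)=\sum_{G_{t_1},\ldots,G_{t_T}}\prod_{k=1}^{T}p_{t_{k-1}|t_k}(G_{t_{k-1}}\mid G_{t_k};\theta)\,p_{\text{base}}(G_{t_T})$ (with $G_{t_0}:=G$) at $\pi\cdot G$ and substitute $G_{t_k}=\pi\cdot H_{t_k}$ for $k=1,\ldots,T$. Because $H\mapsto\pi\cdot H$ is a bijection of the finite state space, the summation range is unchanged; each transition factor collapses to $p_{t_{k-1}|t_k}(H_{t_{k-1}}\mid H_{t_k};\theta)$ with $H_{t_0}:=G$ by the kernel invariance, and the boundary factor collapses to $p_{\text{base}}(H_{t_T})$. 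Hence $p(\pi\cdot G;\theta)=p(G;\theta)$, so $p(\cdot;\theta)$ is permutation-invariant.

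I expect the main obstacle to be Step~(i): permutation-equivariance is a structural property of the specific denoiser rather than of the diffusion formalism, so the argument must lean on the Graph Transformer being built without node-order-dependent features (e.g., no absolute positional encodings) and on the conditions in $\mathcal C$ being injected in a permutation-agnostic manner. Once that is granted, the remaining steps are routine bookkeeping with bijective changes of variables in finite sums and products.
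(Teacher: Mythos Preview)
Your proposal is correct and follows essentially the same argument as the paper: verify that $p_{\text{base}}$ is permutation-invariant (from its i.i.d.\ factorization) and that the reverse transition kernels are permutation-equivariant (because the Graph Transformer denoiser is), then conclude invariance of the marginal. The only cosmetic difference is that the paper delegates the final assembly step to a cited result from GeoDiff, whereas you spell out the bijective change of variables explicitly.
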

\begin{proof}
According to the theoretical results of GeoDiff~\cite{xu2022geodiff}, we just need to show that (1) the base distribution $p_{\text{base}}$ is permutation-invariant and (2) the transitions $p_{t_{k-1}|t_k}(G_{t_{k-1}} | G_{t_k};\theta)$ are permutation-equivariant.

At first, no matter whether to choose Absorb or Uniform, $p_{\text{base}}$ is factorized over i.i.d. node and edge distributions.

Our denoiser is implemented as a Graph Transformer~\cite{liu2024_graphdit}, which is permutation-equivariant: any permutation $\pi_{n}$ of the input tokens induces the corresponding permutation $\pi_n$ on its output tokens.
Hence, the transitions $p_{t_{k-1}|t_k}(G_{t_{k-1}} | G_{t_k};\theta)$ is permutation-equivariant according to Eq.~\ref{eq:reversetransition}.

As these two conditions are satisfied, we have that $p(\pi_{n}.G;\theta) = p(G;\theta)$.
\end{proof}

\subsection{Multi-conditional Molecular Generation via Concrete Scores}
\label{subsec:composable}
As discussed at the beginning of this Section, we focus on multi-conditional molecular generation, where the observed dataset $\mathcal{D}=\{(G,\mathcal{C})_l\}_{l=1}^{N}$ is an unbiased sample drawn from the underlying joint distribution.
Our goal is to estimate the conditional distribution $p(G|\mathcal{C})$ so that we can sample molecular graphs that are valid and respect the specified properties $\mathcal{C}$.
We will elaborate on how to achieve this goal with our score-based graph diffusion model and further to enable the flexible specification of an arbitrary subset of the $M$ properties.

\noindent\textbf{Condition Encoders.} 
To handle multi-conditional generation tasks that involve both categorical and numerical properties, we adopt separate encoders for each property, following GraphDiT. Specifically, each categorical property is processed by a \textit{Categorical Encoder}, and each numerical property by a \textit{Cluster Encoder}. See Appendix~\ref{appendix:encoder} for details of these encoders. The resulting embeddings can then be combined in different ways, such as aggregating all property embeddings to form a joint representation with the mean pooling (see Figure~\ref{motivation}C) or using a single property embedding as the input to the denoiser (see Figure~\ref{motivation}D).

\noindent\textbf{Classifier-free guidance (CFG).}
We first employ the same classifier-free guidance (CFG) as GraphDiT (see Figure~\ref{motivation}C).
During training and inference, all $M$ properties are encoded and combined into a joint condition embedding, which is fed into the backbone of the denoiser to produce multi-conditional score. Then, both this conditional score and the unconditional one are merged to guide generation. Specifically, given a guidance scale $w$, the CFG score is:
\begin{align}
s_\theta^{\mathrm{CFG}}(G_t, \mathcal{C}, t)
= s_\theta(G_t,\varnothing,t)
+ w \,\bigl(s_\theta(G_t,\mathcal{C} ,t) - s_\theta(G_t,\varnothing, t)\bigr).
\end{align}
As can be seen, CFG treats all properties as a whole $\mathcal{C}$, producing a single monolithic score $s_\theta(G_t, \mathcal{C}, t)$ and limiting fine-grained control over individual properties $c_m$.

\begin{figure}[t]
\centering
\includegraphics[width=1\textwidth]{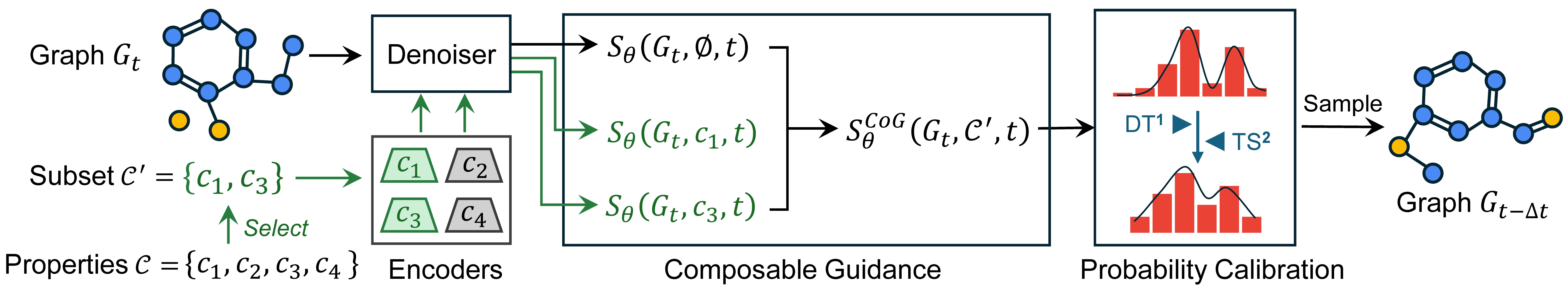}
\caption{An overview of \ours with Composable Guidance (CoG) to enable flexible control ($\text{P}.$: Gas Permeability; $\text{Synth.}$: Synthesizability). $^1$ Dynamic Thresholding, $^2$ Temperature Scaling.
}
\label{fig:framework}
\end{figure}

\noindent\textbf{Composable Guidance (CoG).}
We illustrate one simulation step of \ours with CoG in Figure~\ref{fig:framework}, where the goal is to flexibly generate molecules conditioned on arbitrary subsets of all the $M$ properties. To this end, we rely on the decomposition of multi-conditional probability:
\begin{align}
\label{cog1}
p(G | c_1,\ldots,c_M) \propto p(G)p(c_1,\ldots,c_M | G) = p(G)\prod_{m=1}^{M}p(c_m | G) = p(G)\prod_{m=1}^{M}\frac{p(G | c_m)p(c_m)}{p(G)},
\end{align}
where we assume conditional independence of the properties given $G$, and we substitute each conditional likelihood $p(c_m | G)$ by: $p(c_m | G) = \frac{p(G | c_m)p(c_m)}{p(G)}$.
Following prior work on compositional generation~\cite{liu2022_composable}, we adopt this conditional independence assumption to make score composition tractable, while recognizing that handling correlated conditions remains an important direction for future research.

This decomposition leads to the definition of our CoG score~\cite{liu2022_composable}:
\begin{align}
\label{cog}
s_\theta^{\mathrm{CoG}}(G_t,\mathcal{C},t)
&= s_\theta(G_t,\varnothing,t)
  \;+\;\sum_{m=1}^{M} w_m \Bigl(s_\theta(G_t,c_m,t) - s_\theta(G_t,\varnothing,t)\Bigr).
\end{align}
This CoG strategy modulates the contribution of each single property-specific score $s_\theta(G_t,c_m,t)$ via a corresponding guidance scale $w_m$.
Such a formulation provides fine-grained control, enabling selective emphasis on specific subsets of properties during sampling.

To enable CoG, we train a the denoiser $s_{\theta}$ following the procedure described in Algorithm~\ref{algo.train}. During training, each condition $c_m$ is picked with a fixed probability, $s_{\theta}$ is enabled to jointly learn the conditional score functions $s_{\theta}(G_t, c_m, t)$ for each $c_m \in \mathcal{C}$ and the unconditional one $s_{\theta}(G_t, \varnothing, t)$ in a unified way. At inference time (Algorithm~\ref{algo.sample}), these scores are combined via the CoG strategy to guide generation toward arbitrary subsets of target properties.

\noindent\textbf{Efficiency Concern} 
With CoG, a separate score must be computed for each property, that is to say, the inference time increase linearly with the number of specified properties, which would be costly when this number is large. To eliminate this issue, we propose a training strategy that enables both CoG and a more efficient variant, termed \textbf{fast-CoG}.

During training, instead of uniformly sampling one property $c_m$ (Line3 in Algorithm~\ref{algo.train}), we randomly sample an non-empty subset $\mathcal{C}_{\text{sub}}\subseteq\mathcal{C}$ of the $M$ specified properties and aggregate their embeddings via mean pooling to form a single combined condition, which will be fed into the denoiser. The model is trained to predict the corresponding multi-conditional score $s_\theta(G_t,\mathcal{C}_{\text{sub}} ,t)$ based on this aggregated embedding. At inference time, $s_{\theta}$, learned in this way, enables us to compute the desired multi-conditional score for any subset of specified properties with one time of model inference, which can then be directly used to make state transition. Formally, the fast-CoG score is defined as:
\begin{align} s_\theta^{\mathrm{fast-CoG}}(G_t, \mathcal{C}_{\text{sub}}, t) = s_\theta(G_t,\varnothing,t) + w \,\bigl(s_\theta(G_t,\mathcal{C}_{\text{sub}} ,t) - s_\theta(G_t,\varnothing, t)\bigr).
\end{align}
Unlike CFG, fast-CoG allows flexible selection of property subsets during sampling.
This training strategy enables the model to support both CoG and fast-CoG at inference, providing an option for scenarios that prioritize efficiency.

\begin{algorithm}[tb]
\par\noindent\rule{\linewidth}{1.0pt}\par
\caption{Training a Score-based Graph Diffusion Model for CoG.}
\par\noindent\rule{\linewidth}{0.5pt}\par
\label{algo.train}
\KwIn{%
  Denoiser $s_{\theta}$, dataset $\mathcal{D}=\{(G, \mathcal{C})_l\}_{l=1}^{N}$, 
  transition matrices $Q^{X},Q^{E}$, noise schedule $\bar{\sigma}$, loss weight $\lambda$.
  }
\KwOut{Trained denoiser $s_{\theta}$.}
\While{until convergence}{
    Get a pair $(G,\mathcal{C})$ from $\mathcal{D}$ and let $G_0 \gets G$;\;
    
    Sample timestep $t\sim \mathcal{U}(0,1)$ and property $m \sim \mathcal{U}(\{1,\ldots,M\})$;\;
    
    Construct noisy graph $G_t$: $x^{i}_t \sim p_{t|0}^{X}(\cdot | x^{i}_0)$, $e^{ij}_t \sim p_{t|0}^{E}(\cdot | e^{ij}_0)$ based on Eq.~\ref{eq:singlefwdt0};\;
 
    Randomly set $c_m\gets \varnothing$ with a probability to enable unconditional training;\;
    
    Predict concrete scores: $(s_{\theta}^X,s_{\theta}^E) \gets s_{\theta}(G_t,c_m,t)$;\;
    
    Compute loss $\mathcal{L}_{t}(G_0, G_t, \theta) = \mathcal{L}_{t}^X(G_0, G_t, \theta) + \lambda \mathcal{L}_{t}^E(G_0, G_t, \theta)$ based on Eq.~\ref{lossfunction};\;
    
    Update denoiser $s_\theta$ via gradients $\nabla_{\theta}\mathcal{L}_{t}$.\;
}
\par\noindent\rule{\linewidth}{1.0pt}\par
\end{algorithm}

\begin{algorithm}[tb]
\par\noindent\rule{\linewidth}{1.0pt}\par
\caption{Sampling from Score-based Graph Diffusion Model with CoG.}
\par\noindent\rule{\linewidth}{0.5pt}\par
\label{algo.sample}
\KwIn{Trained denoiser $s_\theta$, base distribution $p_{\text{base}}$, transition matrices $Q^X,Q^E$, noise schedule $\bar{\sigma}$, specified timesteps $0=t_0 < \cdots < t_{T}=1$, specified properties $\mathcal{C}'=\{c_{m_1},\ldots,c_{m_L}\},1\leq m_1 < \cdots < m_L \leq M$.}
\KwOut{Sampled graph $G$}

Sample $G_T \sim p_\text{base}$, $k\gets T$;\;

\While{$k>0$}{
    Compute unconditional score $s_{\theta}(G_t, \varnothing, t_k)$;\;
    
    \For{each $1\leq l \leq L$}{Compute conditional score: $s_{\theta}(G_t, c_{m_l}, t_k)$;\;}

    Compute CoG scores $s_{\theta}^{\text{CoG}}(G_t, \mathcal{C}', t_k)$ based on Eq.~\ref{cog};\;

    Construct transition densities $p_{t_{k-1}|t_{k}}^{X}(x_{t_{k-1}}^i|x_{t_k}^{i};\theta)$ and $p_{t_{k-1}|t_k}^{E}(e_{t_{k-1}}^{ij}|e_{t_k}^{ij};\theta)$ by substituting $s_{\theta}$ by $s_{\theta}^{\text{CoG}}$ based on Eq.~\ref{eq:reversetransition}; \;

    Apply PC to transition densities $p_{t_{k-1}|t_{k}}^{X}(x_{t_{k-1}}^i|x_{t_k}^{i};\theta)$ and $p_{t_{k-1}|t_k}^{E}(e_{t_{k-1}}^{ij}|e_{t_k}^{ij};\theta)$ as described in Section~\ref{subsec:calibration}\tcp*{Optional}
    
    \For{each $1\leq i \leq n$}{
        Sample $x_{t_{k-1}}^i \sim p_{t_{k-1}|t_{k}}^{X}(\cdot|x_{t_k}^{i};\theta)$;\;
    }
    \For{each $1\leq i, j \leq n$}{
        Sample $e_{t_{k-1}}^{ij} \sim p_{t_{k-1}|t_k}^{E}(\cdot|e_{t_k}^{ij} ; \theta)$;\;
    }

    Construct $G_{t_{k-1}} \gets ((x_{t_{k-1}}^i)_{i=1}^{n},(e_{t_{k-1}}^{ij})_{i,j=1}^{n})$;\;
    
    Update $k \gets k-1$;\;
}
\Return $G \gets G_0$\;
\par\noindent\rule{\linewidth}{1.0pt}\par
\end{algorithm}

\subsection{Probability Calibration with Concrete Scores}
\label{subsec:calibration}
After obtaining the guided score, we compute the transition probabilities $p^{\theta}_{s|t}(G_{s} | G_t; \theta)$ according to Eq.~\eqref{eq:prob}. Although guidance improves condition alignment, it may also compromise sample fidelity. In particular, high guidance scales can cause a train–test mismatch~\cite{saharia2022_imagen}, where the guided score, i.e., $s_\theta^{\mathrm{CoG}}$, during sampling falls outside the ranges observed in training. In our work, we observe a similar phenomenon which degrades both the validity and controllability of generated molecules. To address this, we apply \emph{dynamic thresholding} and \emph{temperature scaling}, which jointly calibrate the probabilities.

\noindent\textbf{Dynamic Thresholding.} Let $p_{\text{raw}}$ denote the raw guided probabilities computed with $s_{\theta}^{\text{CoG}}$ based on Eq.~\ref{eq:reversetransition}. We first clamp negative entries to zero and compute the lower and upper percentiles (e.g., $\alpha=1\%$, $\beta=99\%$) across the state dimension:
$
\ell = \mathrm{Quantile}(p_{\mathrm{raw}}, \alpha),
h = \max\bigl(\mathrm{Quantile}(p_{\mathrm{raw}}, \beta),\,\ell + \varepsilon\bigr),
$
where $\varepsilon$ is a small constant (e.g., $10^{-6}$) to avoid division by zero. The probabilities are then normalized by min–max scaling:
\begin{align}
p_{\mathrm{clipped}}
= \frac{\bigl[p_{\mathrm{raw}} - \ell\bigr]_{+}}{\,h - \ell\,},
\qquad
[p]_{+} \equiv \max(p, 0).
\end{align}
This procedure guarantees valid probabilities in $[0,1]$, stabilizing sampling by preventing extreme values from dominating.
Our dynamic thresholding serves as a discrete analogue to Imagen’s method~\cite{saharia2022_imagen}: rather than scaling continuous pixel values, we calibrate discrete transition probabilities by percentile-based clipping, ensuring well regularized probability ranges during sampling.

\noindent\textbf{Temperature Scaling.}  
Temperature scaling is a post-processing method used to improve probabilistic predictions in classification tasks~\cite{guo2017_tempscal1,kull2019beyond_tempscal2,xuan2025_tempscal3}. Its impact on graph diffusion sampling remains unexplored. Here, we investigate whether temperature scaling can improve the validity and controllability of generated graphs by modulating the sharpness of the predicted distributions at each step of sampling.
To this end, we further adjust the distribution via temperature $\tau$:
\begin{align}
p_{\mathrm{scaled}}(i)
= \frac{\bigl(p_{\mathrm{clipped}}(i)\bigr)^{1/\tau}}
       {\sum_{j} \bigl(p_{\mathrm{clipped}}(j)\bigr)^{1/\tau}}\text{ for each outcome $i$}.
\end{align} 
Lower $\tau$ (i.e., $0<\tau<1$) sharpens the distribution and favors high-probability states, while higher $\tau$ (i.e., $\tau>1$) flattens it and promotes diversity.

Probability Calibration combines percentile thresholding and temperature scaling to calibrate guided probability, enhancing both validity and controllability in molecular graph generation. 
In practice, it involves three key parameters: $\alpha$, $\beta$, and $\tau$. Based on our experience, $\tau$ is the parameter most worth tuning, while $\alpha$ and $\beta$ can be assigned reasonable values to ensure stability (e.g., $\alpha = 1\%$, $\beta = 99\%$).
\section{Experiments}
\label{sec:exp}
To evaluate the effectiveness of extending score matching to discrete graph diffusion models, as well as the score-based capacities it enables, namely, composable guidance (CoG) and probability calibration (PC), we conducted comprehensive quantitative experiments to answer the following research questions: \textbf{RQ1}: Does \ours outperform related discrete graph diffusion models? \textbf{RQ2}: Can concrete scores improve the quality and controllability of samples relative to the mean prediction approach? \textbf{RQ3}: How beneficial are CoG and PC for generating valid and controllable samples in multi-conditional generation? \textbf{RQ4}: Can CoG support flexible sampling under varying numbers and types of molecular property conditions?
Then, we assess the sampling efficiency of \ours with various guidance strategies to show its versatility and flexibility.

\subsection{Experimental Setup}
\noindent  \textbf{Datasets.} Following the experimental setup in GraphDiT~\cite{liu2024_graphdit}, we evaluate multi-conditional molecular generation using two types of datasets: (1) For materials design, we use a Polymers dataset~\cite{yampolskii2012_polymers}, which includes four numerical conditions related to synthesizability—Synthetic Accessibility Score (SAS)~\cite{ertl2009_sas} and Synthetic Complexity Score (SCS)~\cite{coley2018_scs}—as well as gas permeability properties ($\text{O}_2$P., $\text{N}_2$P. and $\text{CO}_2$P.). (2) For drug design, we evaluate on three class-balanced dataset--BACE, BBBP, and HIV--from MoleculeNet~\cite{wu2018_moleculenet}. Each dataset provides a categorical condition specific to its task (e.g., blood–brain barrier permeability for BBBP, $\beta$-secretase 1 inhibition for BACE, and HIV replication inhibition for HIV), together with the same numerical synthesizability conditions (SAS and SCS) used in the Polymers dataset.

\noindent  \textbf{Metrics.}    To systematically evaluate model performance, we use over three classes of metrics covering generation quality, distribution Learning, and controllability. First, we assess \textbf{Validity}, defined as the proportion of valid molecules among generated samples. Second, we evaluate the model's ability to learn the target distribution using three metrics: 
\textbf{Diversity} (measured by Tanimoto similarity~\cite{bajusz2015_tanimoto} among generated molecules), \textbf{Similarity} (fragment-based similarity to the reference set), and \textbf{FCD}~\cite{preuer2018_FCD} (Fréchet ChemNet Distance) between generated and reference distributions. Third, we measure controllability: for numerical conditions, we report the \textbf{MAE} between the specified conditions and predicted properties of the generated molecules; for categorical conditions, we report \textbf{Accuracy}. 

\noindent \textbf{Baselines.}   We compare our method with recent diffusion-based models for multi-conditional molecular generation, including GDSS~\cite{jo2022_GDSS}, DiGress~\cite{vignac2022_digress}, and their conditional variants MOOD~\cite{lee2023_MOOD} and DiGress v2, as well as GraphDiT~\cite{liu2024_graphdit}, which represents the current state-of-the-art in this task. For MOOD and DiGress v2, trained multi-task property predictors with the same architecture are used to provide guidance.
We note that GraphDiT adopts a graph-dependent forward diffusion process, which demonstrates superior performance within its full model pipeline~\cite{liu2024_graphdit}.
This creates a noise-model mismatch relative to our independent tokenwise scheme. If anything, this discrepancy may favor GraphDiT, making our evaluation of \ours conservative and thus fair under these conditions.

\noindent \textbf{Implementation Details.} 
Our implementation builds on the official open-source code of GraphDiT~\cite{liu2024_graphdit}, leveraging its Graph Transformer and condition encoder to handle both numerical and categorical properties. We split each dataset into training, validation, and test sets with a 6:2:2 ratio. For evaluation, we generate 10,000 molecules conditioned on the properties of the reference set.
Generation validity and distributional metrics are computed using the MOSES~\cite{polykovskiy2020_moses}. For controllability evaluation, we follow GraphDiT and adopt a random forest model trained on all task-related molecules as an Oracle~\cite{gao2022_oracle}. The properties of samples are then compared with the specified conditions to calculate MAE/Accuracy for numerical/categorical property.

\begin{table}[t]
\caption{Results on the Polymers dataset including four numerical conditions. The best-performing methods for each metric are highlighted in bold, and the second-best are underlined.}
\label{Polymers result}
\centering
\begin{adjustbox}{max width=\textwidth}
\begin{tabular}{ccccccccc}
\toprule
                        & \cellcolor[HTML]{FFFFFF}                           & \multicolumn{3}{c}{\cellcolor[HTML]{FFFFFF}Distribution Learning}                                                                       & \multicolumn{4}{c}{\cellcolor[HTML]{FFFFFF}Controllability}                                                                        \\
\multirow{-2}{*}{Model} & \multirow{-2}{*}{\cellcolor[HTML]{FFFFFF}Validity $\uparrow$} & \cellcolor[HTML]
{FFFFFF}Diversity $\uparrow$ & \cellcolor[HTML]{FFFFFF}Similarity $\uparrow$ & \cellcolor[HTML]{FFFFFF}FCD $\downarrow$ & \cellcolor[HTML]{FFFFFF}Synth. $\downarrow$& \cellcolor[HTML]{FFFFFF}$\text{O}_2$P. $\downarrow$& \cellcolor[HTML]{FFFFFF}$\text{N}_2$P. $\downarrow$& \cellcolor[HTML]{FFFFFF}$\text{CO}_2$P.$\downarrow$ \\ \hline
DiGress                                                                  & \underline{99.13}                                & \underline{0.9099}                                 & 0.2724                                  & 22.7237                           & 2.9842                              & 1.7163                              & 2.0630                              & 1.6738                               \\
DiGress v2                                                               & 98.12                                & \textbf{0.9105}                                 & 0.2771                                  & 21.7311                           & 2.7507                              & 1.7130                              & 2.0632                              & 1.6648                               \\
GDSS                                                                     & 92.05                                & 0.7510                                 & 0.0000                                  & 34.2627                           & 1.3701                              & 1.0271                             & 1.0820                              & 1.0683                               \\
MOOD                                                                    & 98.66                                & 0.8349                                 & 0.0227                                  & 39.3981                           & 1.4019                              & 1.4961                              & 1.7603                              & 1.4748                               \\
GraphDiT                                                                & 82.45                                & 0.8712                                 & \textbf{0.9600}                                  & \underline{6.6443}                           & 1.2973                              & 0.7440                              & 0.8857                              & 0.7550                               \\ \hline
\ours Absorb                                                             & 95.07                                & 0.8793                                 & \underline{0.9568}                                  & 6.8110                           & \underline{1.2572}                               & \underline{0.7311}                              & \underline{0.8754}                              & \underline{0.7466}                               \\
\ours Uniform                                                           & \textbf{99.38}                                & 0.8683                                 & 0.9541                                  & \textbf{6.4351}                           & \textbf{1.1905}                              & \textbf{0.6054}                              & \textbf{0.6993}                              & \textbf{0.6232}                               \\ \bottomrule
\end{tabular}
\end{adjustbox}
\end{table}

\begin{table}[h]
\caption{Comparisons with both numerical and categorical conditions evaluated by MAE and Accuracy, respectively. The best-performing methods for each metric are highlighted in bold, and the second-best are underlined.}
\label{HIV result}
\centering
\begin{adjustbox}{max width=\textwidth}
\begin{tabular}{cccccccc}
\toprule
                          &                         & \cellcolor[HTML]{FFFFFF}                           & \multicolumn{3}{c}{\cellcolor[HTML]{FFFFFF}Distribution Learning}                                    & \multicolumn{2}{c}{\cellcolor[HTML]{FFFFFF}Controllability}                \\
\multirow{-2}{*}{Dataset} & \multirow{-2}{*}{Model} & \multirow{-2}{*}{\cellcolor[HTML]{FFFFFF}Validity$\uparrow$} & \cellcolor[HTML]{FFFFFF}Diversity$\uparrow$ & \cellcolor[HTML]{FFFFFF}Similarity$\uparrow$ & \cellcolor[HTML]{FFFFFF}FCD$\downarrow$ & \cellcolor[HTML]{FFFFFF}MAE$\downarrow$ & \cellcolor[HTML]{FFFFFF}Accuracy$\uparrow$ \\ \hline
                          & DiGress                 & 43.77                                                  & \underline{0.9194}                                 & 0.8562                                  &13.0409                           & 1.9216                                  & 0.5335                                     \\
                          & DiGress v2              &50.50                                                  & 0.9193                                 & 0.8476                                  & 13.3997                           & 1.5934                                  & 0.5331                                     \\
                          & GDSS                    & 69.26                                                  & 0.7817                                 & 0.1032                                 & 45.3416                           & 1.2515                                  & 0.4830                                     \\
                          & MOOD                    & 28.75                                                  & \textbf{0.9280}                                 & 0.1361                                  & 32.3523                           & 2.3144                                  & 0.5106                                     \\
                          & GraphDiT                & 69.06                                                  & 0.8983                                 & \underline{0.9580}                                  &  6.0605                           &  0.3825                                  & 0.9437                                     \\ \cline{2-8} 
                          & SDGD Absorb             & \underline{76.30}                                                  & 0.8973                                 & \textbf{0.9584}                                  & \textbf{5.9495}                           & \textbf{0.2531}                                  & \underline{0.9755}                                     \\
\multirow{-7}{*}{\rotatebox[origin=c]{90}{HIV}}      & SDGD Uniform            & \textbf{83.36}                                                  & 0.8945                                 & 0.9519                                  & \underline{6.0499}                           & \underline{0.3560}                                  & \textbf{0.9898}                          \\ \hline
                          & DiGress                 & 69.60                                                  & 0.9098                                 & 0.6805                                  & 18.6921                           & 2.3658                                  & 0.6536                                     \\
                          & DiGress v2              & 68.92                                                  & \underline{0.9107}                                 & 0.6336                                  & 19.4498                           & 2.2694                                  & 0.6531                                     \\
                          & GDSS                    & 62.18                                                 & 0.8415                                 & 0.2672                                  & 39.9440                           & 1.3788                                  & 0.5037                                     \\
                          & MOOD                    & 80.08                                                  & \textbf{0.9273}                                 & 0.1715                                  & 34.2506                           & 2.0284                                  & 0.4903                                     \\
                          & GraphDiT                & 78.72                                                  & 0.8891                                 & 0.9281                                  & 11.3367                           & \underline{0.4492}                                  & 0.9089                                     \\ \cline{2-8} 
                          & SDGD Absorb             & \underline{86.24}                                                  & 0.8895                                 & \underline{0.9335}                                  & \underline{10.9960}                           & \textbf{0.4354}                                  & \underline{0.9398}                                     \\
\multirow{-7}{*}{\rotatebox[origin=c]{90}{BBBP}}     & SDGD Uniform            & \textbf{95.63}                                                  & 0.8807                                 & \textbf{0.9467}                                  & \textbf{10.8411}                           & 1.0745                                  & \textbf{0.9831}                                     \\ \hline

                    & DiGress                 & 35.11                                                  & \underline{0.8862}                                 & 0.6942                                  & 24.6560                           & 2.0681                                  & 0.5061                                     \\
                          & DiGress v2              & 35.46                                                  & 0.8812                                 & 0.7027                                  & 25.3270                           & 2.3365                                  & 0.5113                                     \\
                          & GDSS                    & 28.79                                                 & 0.8756                                 & 0.2708                                  & 46.7539                           & 1.6422                                  & 0.5036                                     \\
                          & MOOD                    & \textbf{99.47}                                                  & \textbf{0.8902}                                 & 0.2587                                  & 44.2394                           & 1.8853                                  & 0.5062                                     \\
                          & GraphDiT                & 76.00                                                  & 0.8256                                 & 0.8780                                  & 6.9663                           & \underline{0.4171}                                  & \underline{0.9132}                                     \\ \cline{2-8} 
                          & SDGD Absorb             & 77.20                                                  & 0.8251                                 & \underline{0.8786}                                  & \textbf{6.1112}                           & \textbf{0.3963}                                  & 0.8884                                     \\
\multirow{-7}{*}{\rotatebox[origin=c]{90}{BACE}}     & SDGD Uniform            & \underline{96.92}                                                  & 0.8133                                 & \textbf{0.8935}                                  & \underline{6.7618}                           & 0.8221                                  & \textbf{0.9250}   

\\ \bottomrule
\end{tabular}
\end{adjustbox}
\end{table}

\subsection{(RQ1) Overall Performance Comparison}
We first compare the overall performance of \ours with both CoG and PC against existing baselines across all the adopted datasets.

We present the results on the Polymers dataset in Table~\ref{Polymers result}, where \ours consistently achieves notable improvements over the most competing baseline GraphDiT in both validity and controllability. With a transition matrix leading to the \textit{Uniform} $p_{\text{base}}$, \ours attains an average relative improvement in MAE of +15.3\% compared to the strongest baseline.
While DiGress and its v2 variant generate diverse molecules, they struggle to satisfy multiple conditions simultaneously, leading to weaker controllability and lower FCD scores. In contrast, \ours not only maintains strong distribution learning but also achieves the best FCD, confirming the effectiveness of \ours.

Results on datasets with both numerical and categorical conditions (see Table~\ref{HIV result}) further confirm these findings. \ours achieves the best validity and controllability, while maintaining competitive distribution learning, particularly in terms of FCD and similarity. These results demonstrate that \ours can flexibly guide the generation of molecular graphs in diverse conditional settings, producing valid molecules with improved compliance to the specified properties.

\subsection{Ablation Studies}
To assess the contribution of each component, we conduct a detailed ablation study focusing on two aspects: score-based modeling in graph diffusion and two score manipulation techniques.

\subsubsection{(RQ2) Performance of Score-based Modeling}
Score-based modeling is evaluated against the current mean-prediction approach, GraphDiT, using the same guidance strategy. 
Due to space constraints, we report results on two datasets in Table~\ref{Ablation1}, with the complete results across all four datasets provided in Appendix (see Table~\ref{full_Ablation1}).
For clarity, results on the Polymers dataset are reported as the average MAE of gas permeability across all gases, rather than per-gas MAE. 

The most notable improvement is in molecular validity, with \ours achieving better performance under both \textit{Absorb} and \textit{Uniform} $p_{\text{base}}$. Improvements in controllability also remain competitive. Taken together, these findings indicate that score-based modeling substantially improves the validity and controllability of discrete graph diffusion in multi-conditional generation.

\begin{table}[t]
\centering
\caption{Ablation (RQ2): Comparing \ours (w/o PC, and with CFG rather than CoG) to GraphDiT with CFG, where numerical and categorical conditions are evaluated using MAE and Accuracy, respectively.}
\label{Ablation1}
\begin{adjustbox}{max width=\textwidth}
\begin{tabular}{cccccc}
\toprule
\multirow{2}{*}{Dataset}                      & \multirow{2}{*}{Model} & \multirow{2}{*}{Validity$\uparrow$} & \multicolumn{3}{c}{Controllability} \\ \cline{4-6} 
                                              &                        &                           & Synth.$\downarrow$      & $\text{Gas Perm.}\downarrow$     & $\text{Inhibition}\uparrow$    \\ \hline
\multicolumn{1}{l}{\multirow{3}{*}{Polymers}} & GraphDiT               & 82.45                     & 1.2973          & 0.9205        & -      \\
\multicolumn{1}{l}{}                          & \ours Absorb            & \underline{90.15}                     & \textbf{1.1129}          & \textbf{0.8940}   & -          \\
\multicolumn{1}{l}{}                          & \ours Uniform           & \textbf{95.81}                     & \underline{1.2779}          & \underline{0.9179} & -           \\ \hline
\multirow{3}{*}{HIV}                          & GraphDiT               & 69.06                     & 0.3825      & -    & 0.9437            \\
                                              & \ours Absorb            & \underline{74.01}                     & \textbf{0.3035}       & -   & \underline{0.9761}            \\
                                              & \ours Uniform           & \textbf{81.30}                     & \underline{0.3650}     & -     & \textbf{0.9898}            \\  \bottomrule
\end{tabular}
\end{adjustbox}
\end{table}

\begin{table}[th]
\centering
\caption{Ablation (RQ3): Comparison of CoG and CFG in \ours with and without PC. Numerical and categorical conditions are evaluated using MAE and Accuracy, respectively.}
\label{Ablation2}
\begin{adjustbox}{max width=\textwidth}
\begin{tabular}{cccccccc}
\toprule
\multirow{2}{*}{Dataset}      & \multirow{2}{*}{Strategy} & \multirow{2}{*}{\begin{tabular}[c]{@{}c@{}}Probability\\ Calibration\end{tabular}} & \multirow{2}{*}{Validity$\uparrow$} & \multicolumn{3}{c}{Controllability} \\ \cline{5-7} 
                                                         &                           &                                                                                    &                           & Synth.$\downarrow$       & $\text{Gas Perm.}\downarrow$ & $\text{Inhibition}\uparrow$   \\ \toprule
\multirow{4}{*}{Polymers} 
                           & \multirow{2}{*}{CFG}      & $\times$                                                                           & 95.81                     & 1.2779           & 0.9179            & - \\
                           &                           & $\surd$                                                                            & 99.51            & 1.1815  & 0.8088  & -  \\ 
                           & \multirow{2}{*}{CoG}      & $\times$                                                                           & 98.84                     & 1.1784           & 0.7968       & -      \\
                           &                           & $\surd$                                                                            & 99.36            & 1.1905           & 0.7796   & - \\ \hline
\multirow{4}{*}{HIV}      & \multirow{2}{*}{CFG}      & $\times$                                                                           & 81.30                       & 0.3650  & -         & 0.9898           \\
                          &                           & $\surd$                                                                            & 82.34            & 0.3418  & - & 0.9910  \\ 
                          & \multirow{2}{*}{CoG}      & $\times$                                                                           & 80.89                     & 0.3855          & -   & 0.9869           \\
                          &                           & $\surd$                                                                            & 83.36            & 0.3560  & -  & 0.9898  \\  \bottomrule
\end{tabular}
\end{adjustbox}
\end{table}

\subsubsection{(RQ3) Effectiveness of Composable Guidance and Probability Calibration}

As shown in Table~\ref{Ablation2}, we evaluate the effectiveness of CoG on two datasets, with results on all four benchmarks provided in the Appendix (see Table~\ref{full_Ablation2}). Under the same network architecture, hyperparameters, and guidance scale, CoG achieves performance comparable to CFG with a \textit{Uniform} $p_{\text{base}}$. For instance, \ours with CoG attains 98.84\% validity while maintaining strong controllability. Across other datasets, CoG consistently matches CFG in performance while enabling flexible sampling of \ours, underscoring its practicality for multi-conditional molecular generation.

We further assess the effect of PC under both CFG and CoG. Across all datasets and guidance strategies, incorporating PC consistently enhances validity and controllability. Despite its simplicity, PC proves to be a practical and effective tool, serving as a general, plug-and-play component to improve diverse score-manipulation techniques in molecular graph generation.

\begin{table}[th]
\centering
\caption{(RQ4): Comparison of \ours performance when sampling with different subsets of properties specified. $^*$ Total guidance weight of each subset is kept the same. Synth.: Synthesizability; $\text{O}_2$, $\text{N}_2$,  $\text{CO}_2$: Gas Permeability.}
\label{Ablation3}
\begin{tabular}{ccccccc}
\toprule
\multirow{2}{*}{Model}        & \multirow{2}{*}{\begin{tabular}[c]{@{}c@{}}Property\\ $\text{Subsets}^*$\end{tabular}} & \multirow{2}{*}{Validity$\uparrow$} & \multicolumn{4}{c}{Controllability} \\ \cline{4-7} 
                              &                                                                             &                           & Synth.$\downarrow$   & $\text{O}_2$P.$\downarrow$   & $\text{N}_2$P.$\downarrow$   & $\text{CO}_2$P.$\downarrow$  \\ \hline
\multirow{1}{*}{GraphDiT}  & {\{}Synth., $\text{O}_2$, $\text{N}_2$,  $\text{CO}_2${\}}                                            & 82.45                         &1.2973        & 0.7440      & 0.8857      & 0.7550      \\
                             \hline
\multirow{4}{*}{\ours} & {\{}Synth., $\text{O}_2$, $\text{N}_2$,  $\text{CO}_2${\}}                                           & 99.38                         & 1.1905        & 0.6054      & 0.6993      & 0.6232      \\
                              & {\{}Synth.{\}}                                                                & 99.78                         & 0.9179        & -      & -      & -      \\
                              & {\{}$\text{O}_2$, $\text{N}_2$,  $\text{CO}_2${\}}                                                                 & 99.38                         & -        & 0.6001      & 0.6927      & 0.6133      \\
                              & {\{}Synth., $\text{O}_2${\}}                                                        & 99.50                         & 1.1894        & 0.6099      & -      & -      \\ \bottomrule
\end{tabular}
\end{table}

\subsection{(RQ4) Flexibility}
To evaluate the flexibility of \ours, we conduct experiments on the Polymers dataset, which contains two types of four properties and is well-suited for this analysis. Using the same trained models, we assess \ours under different property subsets, including the full set, single-type subsets (e.g., Synthesizability alone or only Gas Permeability properties), and mixed-type subsets (e.g., Synthesizability combined with a single Gas Permeability property).

As shown in Table~\ref{Ablation3}, CoG enables flexible sampling by conditioning generation on arbitrary subsets of properties. When conditioning solely on Synthesizability or the Gas Permeability properties ($\text{O}_2$, $\text{N}_2$, $\text{CO}_2$), the model achieves the best controllability for the targeted property while maintaining high validity, suggesting that focusing on a single property allows more precise control. For mixed subsets (e.g., Synthesizability combined with $\text{O}_2$), the model effectively balances multiple objectives, achieving strong validity and good controllability on the selected properties. Overall, these results demonstrate that CoG supports flexible and practical multi-conditional generation, enabling molecular design to be tailored to diverse application needs.
\subsection{Efficiency}
\label{subsec:limitation}
In the above experiments, we evaluate the performance and flexibility of \ours using CoG. However, a natural concern arises: the inference time of CoG grows linearly with the number of specified properties, which may limit its practicality in efficiency-sensitive scenarios. To address this, in Sec.~\ref{subsec:composable}, we introduce a training strategy that enables \ours to support both CoG and a more efficient variant, fast-CoG, simultaneously.

As shown in Table~\ref{TimeCost}, we measure the average inference time per molecule on the test set using \ours trained with that training strategy, across different property subsets and guidance strategies. The results confirm that the inference time of CoG is longer. In contrast, fast-CoG achieves comparable performance while maintaining more efficient sampling. This demonstrates the strong flexibility of \ours: a single trained model can switch between CoG and fast-CoG with different property subsets at inference, allowing users to balance controllability and efficiency according to the needs of different scenarios.

\begin{table}[th]
\centering
\caption{Evaluation of inference efficiency of \ours using CoG and fast-CoG across different subsets of properties specified. All results are derived from the same trained model, using different sampling settings at inference. $^*$ denotes the inference time per molecule, measured in seconds.}
\label{TimeCost}
\begin{adjustbox}{max width=\textwidth}
\begin{tabular}{cccccccc}
\toprule
\multirow{2}{*}{\begin{tabular}[c]{@{}c@{}}Property\\ Subsets\end{tabular}} & \multirow{2}{*}{Strategy} & \multirow{2}{*}{\begin{tabular}[c]{@{}c@{}}Time$^*$\\ Cost\end{tabular}} &  \multirow{2}{*}{Validity$\uparrow$} & \multicolumn{4}{c}{Controllability}                                                                                                                                                                           \\ \cline{5-8} 
                                                                            &                                                                              &                                                                      &                                       & Synth.$\downarrow$   & $\text{O}_2$P.$\downarrow$   & $\text{N}_2$P.$\downarrow$   & $\text{CO}_2$P.$\downarrow$ \\ \hline
\multirow{2}{*}{{\{}Synth., $\text{O}_2$, $\text{N}_2$,  $\text{CO}_2${\}}  }  & CoG     & 177    & 99.38     & 1.1905       & 0.6054      & 0.6993      & 0.6232   \\
                                                                              & fast-CoG    & \textbf{72}    & 99.48    & 1.1874    & 0.6079      & 0.6998    & 0.6292     \\ \hline
\multirow{2}{*}{{\{}Synth.{\}} }     & CoG      & 73     & 99.78         & 0.9179        & -      & -      & - \\
                                     & fast-CoG      & \textbf{71}    & 99.78      & 0.9179       & -     & -       & -          \\ \hline
\multirow{2}{*}{{\{}$\text{O}_2$, $\text{N}_2$,  $\text{CO}_2${\}} }           & CoG           & 144         & 99.38                         & -        & 0.6001      & 0.6927      & 0.6133         \\
                                                                            & fast-CoG                 & \textbf{73}               & 99.41       & -          & 0.6003      & 0.6950         & 0.6141         \\ \hline
\multirow{2}{*}{ {\{}Synth., $\text{O}_2${\}} }         & CoG         & 109         & 99.50           & 1.1894        & 0.6099      & -      & -       \\
                                                         & fast-CoG    & \textbf{73}          & 99.44       & 1.1732        & 0.6297        & -         & -         \\ \bottomrule
\end{tabular}
\end{adjustbox}
\end{table}

\section{Conclusion and Future Directions}
In this work, inspired by the flexibility of score-based modeling in continuous-state diffusion, we introduce concrete scores for discrete graph diffusion. By bridging the gap between discrete molecular structures and score-based modeling, concrete scores open the door to applying a wide range of score-based techniques to discrete graph domains. To demonstrate their flexibility and effectiveness, we propose two novel strategies and validate them on the challenging task of multi-conditional molecular generation. We demonstrate that \ours achieves strong performance and exhibits high flexibility on several multi-conditional molecular generation benchmarks. 

Recent progress in discrete flow matching—notably Discrete Flow Matching (DFM) for general discrete data~\cite{gat2024discrete} and its graph-specialized variant DeFoG~\cite{qinmadeira2024defog}—offers an alternative generative paradigm that disentangles training and sampling for increased efficiency. In contrast, our method operates within a score-based diffusion framework, enabling principled score composability and flexible multi-conditional control; exploring hybrid approaches that combine composable score guidance with the sampling efficiency of discrete flow matching is an exciting direction for future work.
Moreover, we hope to extend our framework to a wider range of graph generation domains such as citation and social networks, providing a unified foundation for controllable graph generative modeling.

\bibliography{sample}

\begin{thebibliography}{50}
\providecommand{\natexlab}[1]{#1}
\providecommand{\url}[1]{\texttt{#1}}
\expandafter\ifx\csname urlstyle\endcsname\relax
  \providecommand{\doi}[1]{doi: #1}\else
  \providecommand{\doi}{doi: \begingroup \urlstyle{rm}\Url}\fi

\bibitem[Ahn et~al.(2024)Ahn, Cho, Min, Jang, Kim, Kim, Park, Jin, and Kim]{ahn2024_PAG}
Donghoon Ahn, Hyoungwon Cho, Jaewon Min, Wooseok Jang, Jungwoo Kim, SeonHwa Kim, Hyun~Hee Park, Kyong~Hwan Jin, and Seungryong Kim.
\newblock Self-rectifying diffusion sampling with perturbed-attention guidance.
\newblock In \emph{European Conference on Computer Vision (ECCV)}, pages 1--17. Springer, 2024.

\bibitem[Austin et~al.(2021)Austin, Johnson, Ho, Tarlow, and Van Den~Berg]{austin2021structured_D3PM}
Jacob Austin, Daniel~D Johnson, Jonathan Ho, Daniel Tarlow, and Rianne Van Den~Berg.
\newblock Structured denoising diffusion models in discrete state-spaces.
\newblock \emph{Advances in Neural Information Processing Systems (NeurIPS)}, 34:\penalty0 17981--17993, 2021.

\bibitem[Bajusz et~al.(2015)Bajusz, R{\'a}cz, and H{\'e}berger]{bajusz2015_tanimoto}
D{\'a}vid Bajusz, Anita R{\'a}cz, and K{\'a}roly H{\'e}berger.
\newblock Why is tanimoto index an appropriate choice for fingerprint-based similarity calculations?
\newblock \emph{Journal of Cheminformatics}, 7:\penalty0 1--13, 2015.

\bibitem[Bilodeau et~al.(2022)Bilodeau, Jin, Jaakkola, Barzilay, and Jensen]{bilodeau2022_earlywork}
Camille Bilodeau, Wengong Jin, Tommi Jaakkola, Regina Barzilay, and Klavs~F Jensen.
\newblock Generative models for molecular discovery: Recent advances and challenges.
\newblock \emph{Wiley Interdisciplinary Reviews: Computational Molecular Science}, 12\penalty0 (5):\penalty0 e1608, 2022.

\bibitem[Campbell et~al.(2022)Campbell, Benton, De~Bortoli, Rainforth, Deligiannidis, and Doucet]{campbell2022_absorb}
Andrew Campbell, Joe Benton, Valentin De~Bortoli, Thomas Rainforth, George Deligiannidis, and Arnaud Doucet.
\newblock A continuous time framework for discrete denoising models.
\newblock \emph{Advances in Neural Information Processing Systems (NeurIPS)}, 35:\penalty0 28266--28279, 2022.

\bibitem[Coley et~al.(2018)Coley, Rogers, Green, and Jensen]{coley2018_scs}
Connor~W Coley, Luke Rogers, William~H Green, and Klavs~F Jensen.
\newblock Scscore: synthetic complexity learned from a reaction corpus.
\newblock \emph{Journal of Chemical Information and Modeling}, 58\penalty0 (2):\penalty0 252--261, 2018.

\bibitem[Dhariwal and Nichol(2021)]{dhariwal2021_classifiergui}
Prafulla Dhariwal and Alexander Nichol.
\newblock Diffusion models beat gans on image synthesis.
\newblock \emph{Advances in Neural Information Processing Systems (NeurIPS)}, 34:\penalty0 8780--8794, 2021.

\bibitem[Du et~al.(2024)Du, Jamasb, Guo, Fu, Harris, Wang, Duan, Li{\`o}, Schwaller, and Blundell]{du2024machine_intro2}
Yuanqi Du, Arian~R Jamasb, Jeff Guo, Tianfan Fu, Charles Harris, Yingheng Wang, Chenru Duan, Pietro Li{\`o}, Philippe Schwaller, and Tom~L Blundell.
\newblock Machine learning-aided generative molecular design.
\newblock \emph{Nature Machine Intelligence}, 6\penalty0 (6):\penalty0 589--604, 2024.

\bibitem[Efron(2011)]{efron2011_tweedie}
Bradley Efron.
\newblock Tweedie’s formula and selection bias.
\newblock \emph{Journal of the American Statistical Association}, 106\penalty0 (496):\penalty0 1602--1614, 2011.

\bibitem[Ertl and Schuffenhauer(2009)]{ertl2009_sas}
Peter Ertl and Ansgar Schuffenhauer.
\newblock Estimation of synthetic accessibility score of drug-like molecules based on molecular complexity and fragment contributions.
\newblock \emph{Journal of Cheminformatics}, 1\penalty0 (1):\penalty0 8, 2009.

\bibitem[Gao et~al.(2022)Gao, Fu, Sun, and Coley]{gao2022_oracle}
Wenhao Gao, Tianfan Fu, Jimeng Sun, and Connor Coley.
\newblock Sample efficiency matters: a benchmark for practical molecular optimization.
\newblock \emph{Advances in Neural Information Processing Systems (NeurIPS)}, 35:\penalty0 21342--21357, 2022.

\bibitem[Gat et~al.(2024)Gat, Remez, Shaul, Kreuk, Chen, Synnaeve, Adi, and Lipman]{gat2024discrete}
Itai Gat, Tal Remez, Neta Shaul, Felix Kreuk, Ricky T.~Q. Chen, Gabriel Synnaeve, Yossi Adi, and Yaron Lipman.
\newblock Discrete flow matching.
\newblock In \emph{The Thirty-eighth Annual Conference on Neural Information Processing Systems}, 2024.

\bibitem[Gebauer et~al.(2022)Gebauer, Gastegger, Hessmann, M{\"u}ller, and Sch{\"u}tt]{gebauer2022_moleculesdesign}
Niklas~WA Gebauer, Michael Gastegger, Stefaan~SP Hessmann, Klaus-Robert M{\"u}ller, and Kristof~T Sch{\"u}tt.
\newblock Inverse design of 3d molecular structures with conditional generative neural networks.
\newblock \emph{Nature Communications}, 13\penalty0 (1):\penalty0 973, 2022.

\bibitem[Guo et~al.(2017)Guo, Pleiss, Sun, and Weinberger]{guo2017_tempscal1}
Chuan Guo, Geoff Pleiss, Yu~Sun, and Kilian~Q Weinberger.
\newblock On calibration of modern neural networks.
\newblock In \emph{International Conference on Machine Learning (ICML)}, pages 1321--1330. PMLR, 2017.

\bibitem[He et~al.(2016)He, Zhang, Ren, and Sun]{he2016deep_warmup}
Kaiming He, Xiangyu Zhang, Shaoqing Ren, and Jian Sun.
\newblock Deep residual learning for image recognition.
\newblock In \emph{The IEEE/CVF Conference on Computer Vision and Pattern Recognition (CVPR)}, pages 770--778, 2016.

\bibitem[Ho and Salimans(2021)]{ho2021_cfg}
Jonathan Ho and Tim Salimans.
\newblock Classifier-free diffusion guidance.
\newblock In \emph{NeurIPS 2021 Workshop on Deep Generative Models and Downstream Applications (NeurIPS Workshop)}, 2021.

\bibitem[Ho et~al.(2020)Ho, Jain, and Abbeel]{ho2020_DDPM}
Jonathan Ho, Ajay Jain, and Pieter Abbeel.
\newblock Denoising diffusion probabilistic models.
\newblock \emph{Advances in Neural Information Processing Systems (NeurIPS)}, 33:\penalty0 6840--6851, 2020.

\bibitem[Hu et~al.(2023)Hu, Zhang, Asano, Burghouts, and Snoek]{hu2023_selfguidance}
Vincent~Tao Hu, David~W Zhang, Yuki~M Asano, Gertjan~J Burghouts, and Cees~GM Snoek.
\newblock Self-guided diffusion models.
\newblock In \emph{The IEEE/CVF Conference on Computer Vision and Pattern Recognition (CVPR)}, pages 18413--18422, 2023.

\bibitem[Huang et~al.(2024)Huang, Sun, Du, and Lv]{TNNLS_2/3Dgraph}
Han Huang, Leilei Sun, Bowen Du, and Weifeng Lv.
\newblock Learning joint 2-d and 3-d graph diffusion models for complete molecule generation.
\newblock \emph{IEEE Transactions on Neural Networks and Learning Systems (TNNLS)}, 35\penalty0 (9):\penalty0 11857--11871, 2024.

\bibitem[Huang and Belongie(2017)]{huang2017_adaln}
Xun Huang and Serge Belongie.
\newblock Arbitrary style transfer in real-time with adaptive instance normalization.
\newblock In \emph{International Conference on Computer Vision (ICCV)}, pages 1501--1510, 2017.

\bibitem[Jo et~al.(2022)Jo, Lee, and Hwang]{jo2022_GDSS}
Jaehyeong Jo, Seul Lee, and Sung~Ju Hwang.
\newblock Score-based generative modeling of graphs via the system of stochastic differential equations.
\newblock In \emph{International Conference on Machine Learning (ICML)}, pages 10362--10383, 2022.

\bibitem[Kaiser et~al.(2024)Kaiser, Adaloglou, and Kollmann]{kaiser2024_slideguidance}
Tim Kaiser, Nikolas Adaloglou, and Markus Kollmann.
\newblock The unreasonable effectiveness of guidance for diffusion models.
\newblock \emph{arXiv preprint arXiv:2411.10257}, 2024.

\bibitem[Karras et~al.(2024)Karras, Aittala, Kynk{\"a}{\"a}nniemi, Lehtinen, Aila, and Laine]{karras2024_guidebyitself}
Tero Karras, Miika Aittala, Tuomas Kynk{\"a}{\"a}nniemi, Jaakko Lehtinen, Timo Aila, and Samuli Laine.
\newblock Guiding a diffusion model with a bad version of itself.
\newblock \emph{Advances in Neural Information Processing Systems (NeurIPS)}, 37:\penalty0 52996--53021, 2024.

\bibitem[Kelly(1981)]{kelly1981_Kelly}
Frank~P Kelly.
\newblock Reversibility and stochastic networks.
\newblock \emph{PF Kelly—New York: Willy}, 1981.

\bibitem[Kingma and Ba(2015)]{2015kingma_adam}
Diederik~P. Kingma and Jimmy Ba.
\newblock Adam: A method for stochastic optimization.
\newblock In \emph{International Conference on Learning Representations (ICLR)}, 2015.

\bibitem[Kull et~al.(2019)Kull, Perello~Nieto, K{\"a}ngsepp, Silva~Filho, Song, and Flach]{kull2019beyond_tempscal2}
Meelis Kull, Miquel Perello~Nieto, Markus K{\"a}ngsepp, Telmo Silva~Filho, Hao Song, and Peter Flach.
\newblock Beyond temperature scaling: Obtaining well-calibrated multi-class probabilities with dirichlet calibration.
\newblock \emph{Advances in Neural Information Processing Systems (NeurIPS)}, 32, 2019.

\bibitem[Kynk{\"a}{\"a}nniemi et~al.(2024)Kynk{\"a}{\"a}nniemi, Aittala, Karras, Laine, Aila, and Lehtinen]{kynkaanniemi2024_guidanceinterval}
Tuomas Kynk{\"a}{\"a}nniemi, Miika Aittala, Tero Karras, Samuli Laine, Timo Aila, and Jaakko Lehtinen.
\newblock Applying guidance in a limited interval improves sample and distribution quality in diffusion models.
\newblock \emph{Advances in Neural Information Processing Systems (NeurIPS)}, 37:\penalty0 122458--122483, 2024.

\bibitem[Lee et~al.(2023)Lee, Jo, and Hwang]{lee2023_MOOD}
Seul Lee, Jaehyeong Jo, and Sung~Ju Hwang.
\newblock Exploring chemical space with score-based out-of-distribution generation.
\newblock In \emph{International Conference on Machine Learning (ICML)}, pages 18872--18892. PMLR, 2023.

\bibitem[Liu et~al.(2024)Liu, Xu, Luo, and Jiang]{liu2024_graphdit}
Gang Liu, Jiaxin Xu, Tengfei Luo, and Meng Jiang.
\newblock Graph diffusion transformers for multi-conditional molecular generation.
\newblock \emph{Advances in Neural Information Processing Systems (NeurIPS)}, 37:\penalty0 8065--8092, 2024.

\bibitem[Liu et~al.(2022)Liu, Li, Du, Torralba, and Tenenbaum]{liu2022_composable}
Nan Liu, Shuang Li, Yilun Du, Antonio Torralba, and Joshua~B Tenenbaum.
\newblock Compositional visual generation with composable diffusion models.
\newblock In \emph{European Conference on Computer Vision (ECCV)}, pages 423--439. Springer, 2022.

\bibitem[Lou et~al.(2024)Lou, Meng, and Ermon]{lou2024_sedd}
Aaron Lou, Chenlin Meng, and Stefano Ermon.
\newblock Discrete diffusion modeling by estimating the ratios of the data distribution.
\newblock In \emph{International Conference on Machine Learning (ICML)}, pages 32819--32848, 2024.

\bibitem[Meng et~al.(2022)Meng, Choi, Song, and Ermon]{meng2022_concretescore}
Chenlin Meng, Kristy Choi, Jiaming Song, and Stefano Ermon.
\newblock Concrete score matching: Generalized score matching for discrete data.
\newblock \emph{Advances in Neural Information Processing Systems (NeurIPS)}, 35:\penalty0 34532--34545, 2022.

\bibitem[Niu et~al.(2020)Niu, Song, Song, Zhao, Grover, and Ermon]{niu2020_EDPGNN}
Chenhao Niu, Yang Song, Jiaming Song, Shengjia Zhao, Aditya Grover, and Stefano Ermon.
\newblock Permutation invariant graph generation via score-based generative modeling.
\newblock In \emph{International Conference on Artificial Intelligence and Statistics (AISTATS)}, pages 4474--4484. PMLR, 2020.

\bibitem[Polykovskiy et~al.(2020)Polykovskiy, Zhebrak, Sanchez-Lengeling, Golovanov, Tatanov, Belyaev, Kurbanov, Artamonov, Aladinskiy, Veselov, et~al.]{polykovskiy2020_moses}
Daniil Polykovskiy, Alexander Zhebrak, Benjamin Sanchez-Lengeling, Sergey Golovanov, Oktai Tatanov, Stanislav Belyaev, Rauf Kurbanov, Aleksey Artamonov, Vladimir Aladinskiy, Mark Veselov, et~al.
\newblock Molecular sets (moses): a benchmarking platform for molecular generation models.
\newblock \emph{Frontiers in Pharmacology}, 11:\penalty0 565644, 2020.

\bibitem[Preuer et~al.(2018)Preuer, Renz, Unterthiner, Hochreiter, and Klambauer]{preuer2018_FCD}
Kristina Preuer, Philipp Renz, Thomas Unterthiner, Sepp Hochreiter, and Gunter Klambauer.
\newblock Fr{\'e}chet chemnet distance: a metric for generative models for molecules in drug discovery.
\newblock \emph{Journal of Chemical Information and Modeling}, 58\penalty0 (9):\penalty0 1736--1741, 2018.

\bibitem[Qin et~al.(2025)Qin, Madeira, Thanou, and Frossard]{qinmadeira2024defog}
Yiming Qin, Manuel Madeira, Dorina Thanou, and Pascal Frossard.
\newblock Defog: Discrete flow matching for graph generation.
\newblock In \emph{International Conference on Machine Learning (ICML)}, 2025.

\bibitem[Sadat et~al.(2025)Sadat, Kansy, Hilliges, and Weber]{sadatno_ICG}
Seyedmorteza Sadat, Manuel Kansy, Otmar Hilliges, and Romann~M Weber.
\newblock No training, no problem: Rethinking classifier-free guidance for diffusion models.
\newblock In \emph{International Conference on Learning Representations (ICLR)}, 2025.

\bibitem[Saharia et~al.(2022)Saharia, Chan, Saxena, Li, Whang, Denton, Ghasemipour, Gontijo~Lopes, Karagol~Ayan, Salimans, et~al.]{saharia2022_imagen}
Chitwan Saharia, William Chan, Saurabh Saxena, Lala Li, Jay Whang, Emily~L Denton, Kamyar Ghasemipour, Raphael Gontijo~Lopes, Burcu Karagol~Ayan, Tim Salimans, et~al.
\newblock Photorealistic text-to-image diffusion models with deep language understanding.
\newblock \emph{Advances in Neural Information Processing Systems (NeurIPS)}, 35:\penalty0 36479--36494, 2022.

\bibitem[Siraudin et~al.(2024)Siraudin, Malliaros, and Morris]{siraudin2024_cometh}
Antoine Siraudin, Fragkiskos~D Malliaros, and Christopher Morris.
\newblock Cometh: A continuous-time discrete-state graph diffusion model.
\newblock \emph{arXiv preprint arXiv:2406.06449}, 2024.

\bibitem[Song et~al.(2021)Song, Sohl-Dickstein, Kingma, Kumar, Ermon, and Poole]{songscore_seds}
Yang Song, Jascha Sohl-Dickstein, Diederik~P Kingma, Abhishek Kumar, Stefano Ermon, and Ben Poole.
\newblock Score-based generative modeling through stochastic differential equations.
\newblock In \emph{International Conference on Learning Representations (ICLR)}, 2021.

\bibitem[Sun et~al.(2023)Sun, Yu, Dai, Schuurmans, and Dai]{sunscore}
Haoran Sun, Lijun Yu, Bo~Dai, Dale Schuurmans, and Hanjun Dai.
\newblock Score-based continuous-time discrete diffusion models.
\newblock In \emph{International Conference on Learning Representations (ICLR)}, 2023.

\bibitem[Vignac et~al.(2023)Vignac, Krawczuk, Siraudin, Wang, Cevher, and Frossard]{vignac2022_digress}
Clement Vignac, Igor Krawczuk, Antoine Siraudin, Bohan Wang, Volkan Cevher, and Pascal Frossard.
\newblock Digress: Discrete denoising diffusion for graph generation.
\newblock In \emph{International Conference on Learning Representations (ICLR)}, 2023.

\bibitem[Wang et~al.(2025)Wang, Chen, Ma, Yu, Wang, Liu, Ye, Sakurai, and Zeng]{wang2025_into1}
Zixu Wang, Yangyang Chen, Pengsen Ma, Zhou Yu, Jianmin Wang, Yuansheng Liu, Xiucai Ye, Tetsuya Sakurai, and Xiangxiang Zeng.
\newblock Image-based generation for molecule design with sketchmol.
\newblock \emph{Nature Machine Intelligence}, 7\penalty0 (2):\penalty0 244--255, 2025.

\bibitem[Wu et~al.(2018)Wu, Ramsundar, Feinberg, Gomes, Geniesse, Pappu, Leswing, and Pande]{wu2018_moleculenet}
Zhenqin Wu, Bharath Ramsundar, Evan~N Feinberg, Joseph Gomes, Caleb Geniesse, Aneesh~S Pappu, Karl Leswing, and Vijay Pande.
\newblock Moleculenet: a benchmark for molecular machine learning.
\newblock \emph{Chemical science}, 9\penalty0 (2):\penalty0 513--530, 2018.

\bibitem[Xu et~al.(2022)Xu, Yu, Song, Shi, Ermon, and Tang]{xu2022geodiff}
Minkai Xu, Lantao Yu, Yang Song, Chence Shi, Stefano Ermon, and Jian Tang.
\newblock Geodiff: A geometric diffusion model for molecular conformation generation.
\newblock In \emph{International Conference on Learning Representations (ICLR)}, 2022.

\bibitem[Xu et~al.(2024)Xu, Qiu, Chen, Chen, Fan, Pan, Zeng, Das, and Tong]{xu2024_disco}
Zhe Xu, Ruizhong Qiu, Yuzhong Chen, Huiyuan Chen, Xiran Fan, Menghai Pan, Zhichen Zeng, Mahashweta Das, and Hanghang Tong.
\newblock Discrete-state continuous-time diffusion for graph generation.
\newblock \emph{Advances in Neural Information Processing Systems (NeurIPS)}, 37:\penalty0 79704--79740, 2024.

\bibitem[Xuan et~al.(2025)Xuan, Yang, and Li]{xuan2025_tempscal3}
Hao Xuan, Bokai Yang, and Xingyu Li.
\newblock Exploring the impact of temperature scaling in softmax for classification and adversarial robustness.
\newblock \emph{arXiv preprint arXiv:2502.20604}, 2025.

\bibitem[Yampolskii(2012)]{yampolskii2012_polymers}
Yuri Yampolskii.
\newblock Polymeric gas separation membranes.
\newblock \emph{Macromolecules}, 45\penalty0 (8):\penalty0 3298--3311, 2012.

\bibitem[Zhang et~al.(2025)Zhang, Huang, Lou, Ding, Cao, and Wang]{TNNLS_GraphCas}
Rongchao Zhang, Yu~Huang, Yiwei Lou, Weiping Ding, Yongzhi Cao, and Hanpin Wang.
\newblock Synergistic attention-guided cascaded graph diffusion model for complementarity determining region synthesis.
\newblock \emph{IEEE Transactions on Neural Networks and Learning Systems (TNNLS)}, 36\penalty0 (7):\penalty0 11875--11886, 2025.

\bibitem[Zheng and Lan(2024)]{zheng2024_characteristicguidance}
Candi Zheng and Yuan Lan.
\newblock Characteristic guidance: Non-linear correction for diffusion model at large guidance scale.
\newblock In \emph{International Conference on Machine Learning (ICML)}, pages 61386--61412, 2024.

\end{thebibliography}
\newpage
\clearpage
\section{Full Ablation Results}
Tables~\ref{full_Ablation1} and~\ref{full_Ablation2} provide the complete ablation results across all four datasets, complementing the condensed versions shown in the main text. Table~\ref{full_Ablation1} compares \ours (without PC) with CFG against GraphDiT using the same guidance strategy, reporting validity and controllability metrics (Avg. MAE for Polymers, Property Accuracy for HIV, BBBP, and BACE). Table~\ref{full_Ablation2} further examines the effects of Composable Guidance (CoG) and Probability Calibration (PC) under both CFG and CoG strategies, highlighting all metrics improved by PC. These full tables allow a comprehensive view of our method’s performance and ablation effects across all datasets.

\begin{table}[h]
\centering
\caption{Ablation (RQ2): Comparing \ours (w/o PC, and with CFG rather than CoG) to GraphDiT with CFG, where numerical and categorical conditions are evaluated using MAE and Accuracy, respectively.}
\label{full_Ablation1}
\begin{adjustbox}{max width=\textwidth}
\begin{tabular}{cccccc}
\toprule
\multirow{2}{*}{Dataset}                      & \multirow{2}{*}{Model} & \multirow{2}{*}{Validity$\uparrow$} & \multicolumn{3}{c}{Controllability} \\ \cline{4-6} 
                                              &                        &                           & Synth.$\downarrow$      & $\text{Gas Perm.}\downarrow$   & $\text{Accuracy}\uparrow$  \\ \hline
\multicolumn{1}{l}{\multirow{3}{*}{Polymers}} & GraphDiT               & 82.45                     & 1.2973          & 0.9205  & -          \\
\multicolumn{1}{l}{}                          & \ours Absorb            & \underline{90.15}                     & \textbf{1.1129}          & \textbf{0.8940}    & -        \\
\multicolumn{1}{l}{}                          & \ours Uniform           & \textbf{95.81}                     & \underline{1.2779}          & \underline{0.9179}       & -     \\ \hline
\multirow{3}{*}{HIV}                          & GraphDiT               & 69.06                     & 0.3825      & -    & 0.9437            \\
                                              & \ours Absorb            & \underline{74.01}                     & \textbf{0.3035}         & - & \underline{0.9761}            \\
                                              & \ours Uniform           & \textbf{81.30}                     & \underline{0.3650}        & -  & \textbf{0.9898}            \\ \hline
\multirow{3}{*}{BBBP}                         & GraphDiT               & 78.72                     & \textbf{0.4492}        & -  & 0.9089            \\
                                              & \ours Absorb            & \underline{86.58}                     & \underline{0.4625}        & -  & \underline{0.9449}            \\
                                              & \ours Uniform           & \textbf{88.02}                     & 1.1669       & -   & \textbf{0.9725}            \\ \hline
\multirow{3}{*}{BACE}                         & GraphDiT               & 76.00                     & \underline{0.4171}      & -    & \underline{0.9132}            \\
                                              & \ours Absorb            & \underline{78.08}                     & \textbf{0.3904}       & -   & 0.8998            \\
                                              & \ours Uniform           & \textbf{92.23}                     & 0.8683      & -    & \textbf{0.9290}            \\ \bottomrule
\end{tabular}
\end{adjustbox}
\end{table}

\begin{table}[h]
\centering
\caption{Ablation (RQ3): Comparison of CoG and CFG in \ours with and without PC. Metrics improved by PC are highlighted in bold. Numerical and categorical conditions are evaluated using MAE and Accuracy, respectively.}
\label{full_Ablation2}
\begin{adjustbox}{max width=\textwidth}
\begin{tabular}{cccccccc}
\hline
\multirow{2}{*}{Dataset}  & \multirow{2}{*}{\begin{tabular}[c]{@{}c@{}}Transition\\ Matrix\end{tabular}}        & \multirow{2}{*}{Strategy} & \multirow{2}{*}{\begin{tabular}[c]{@{}c@{}}Probability\\ Calibration\end{tabular}} & \multirow{2}{*}{Validity$\uparrow$} & \multicolumn{3}{c}{Controllability} \\ \cline{6-8} 
                          &                               &                           &                                                                                    &                           & Synth.$\downarrow$        & $\text{Gas Perm.}\downarrow$ & $\text{Accuracy}\uparrow$   \\ \hline
\multirow{8}{*}{Polymers} & \multirow{4}{*}{Absorb}  & \multirow{2}{*}{CFG}      & $\times$                                                                           & 90.15                     & 1.1129           & 0.8940         & -  \\
                          &                               &                           & $\surd$                                                                            & \textbf{90.89}            & 1.1264           & \textbf{0.8791} & - \\ \cline{3-8} 
                          &                               & \multirow{2}{*}{CoG}      & $\times$                                                                           & 94.89                     & 1.2506           & 0.9066         & -  \\
                          &                               &                           & $\surd$                                                                            & \textbf{95.07}            & 1.2572           & \textbf{0.9026} & - \\ \cline{2-8} 
                          & \multirow{4}{*}{Uniform} & \multirow{2}{*}{CFG}      & $\times$                                                                           & 95.81                     & 1.2779           & 0.9179        & -   \\
                          &                               &                           & $\surd$                                                                            & \textbf{99.51}            & \textbf{1.1815}  & \textbf{0.8088} & - \\ \cline{3-8} 
                          &                               & \multirow{2}{*}{CoG}      & $\times$                                                                           & 98.84                     & 1.1784           & 0.7968        & -   \\
                          &                               &                           & $\surd$                                                                            & \textbf{99.36}            & 1.1905           & \textbf{0.7796} & - \\ \hline
\multirow{8}{*}{HIV}      & \multirow{4}{*}{Absorb}  & \multirow{2}{*}{CFG}      & $\times$                                                                           & 74.01                     & 0.3035         & -  & 0.9761           \\
                          &                               &                           & $\surd$                                                                            & \textbf{75.95}            & \textbf{0.2840}  & - & \textbf{0.9770}  \\ \cline{3-8} 
                          &                               & \multirow{2}{*}{CoG}      & $\times$                                                                           & 75.74                     & 0.2548          & -  & 0.9745           \\
                          &                               &                           & $\surd$                                                                            & \textbf{76.30}            & \textbf{0.2531} & -  & \textbf{0.9755}  \\ \cline{2-8} 
                          & \multirow{4}{*}{Uniform} & \multirow{2}{*}{CFG}      & $\times$                                                                           & 81.30                     & 0.3650      & -     & 0.9898           \\
                          &                               &                           & $\surd$                                                                            & \textbf{82.34}            & \textbf{0.3418} & - & \textbf{0.9910}  \\ \cline{3-8} 
                          &                               & \multirow{2}{*}{CoG}      & $\times$                                                                           & 80.89                     & 0.3855         & -  & 0.9869           \\
                          &                               &                           & $\surd$                                                                            & \textbf{83.36}            & \textbf{0.3560} & - & \textbf{0.9898}  \\ \hline
\multirow{8}{*}{BBBP}     & \multirow{4}{*}{Absorb}  & \multirow{2}{*}{CFG}      & $\times$                                                                           & 86.58                     & 0.4625         & -  & 0.9449           \\
                          &                               &                           & $\surd$                                                                            & 86.15                     & \textbf{0.4494} & - & \textbf{0.9458}  \\ \cline{3-8} 
                          &                               & \multirow{2}{*}{CoG}      & $\times$                                                                           & 85.12                     & 0.4375         & -  & 0.9449           \\
                          &                               &                           & $\surd$                                                                            & \textbf{86.24}            & \textbf{0.4354} & - & 0.9398           \\ \cline{2-8} 
                          & \multirow{4}{*}{Uniform} & \multirow{2}{*}{CFG}      & $\times$                                                                           & 88.02                     & 1.1669         & -  & 0.9725           \\
                          &                               &                           & $\surd$                                                                            & \textbf{94.42}            & \textbf{1.0707} & - & \textbf{0.9793}  \\ \cline{3-8} 
                          &                               & \multirow{2}{*}{CoG}      & $\times$                                                                           & 86.09                     & 1.1801         & -  & 0.9705           \\
                          &                               &                           & $\surd$                                                                            & \textbf{95.63}            & \textbf{1.0745} & - & \textbf{0.9831}  \\ \hline
\multirow{8}{*}{BACE}     & \multirow{4}{*}{Absorb}  & \multirow{2}{*}{CFG}      & $\times$                                                                           & 78.08                     & 0.3904      & -     & 0.8998           \\
                          &                               &                           & $\surd$                                                                            & \textbf{78.18}            & \textbf{0.3891} & - & \textbf{0.9073}  \\ \cline{3-8} 
                          &                               & \multirow{2}{*}{CoG}      & $\times$                                                                           & 77.20                     & 0.4033
                          & - & 0.8874           \\
                          &                               &                           & $\surd$                                                                            & \textbf{77.42}            & \textbf{0.3964}  & - & \textbf{0.8884}  \\ \cline{2-8} 
                          & \multirow{4}{*}{Uniform} & \multirow{2}{*}{CFG}      & $\times$                                                                           & 92.23                     & 0.8683      & -     & 0.9290           \\
                          &                               &                           & $\surd$                                                                            & \textbf{95.74}            & \textbf{0.8326} & -  & \textbf{0.9395}  \\ \cline{3-8} 
                          &                               & \multirow{2}{*}{CoG}      & $\times$                                                                           & 95.81                     & 0.8547         & -   & 0.9183           \\
                          &                               &                           & $\surd$                                                                            & \textbf{96.92}            & \textbf{0.8221}  & - & \textbf{0.9250}  \\ \hline
\end{tabular}
\end{adjustbox}
\end{table}

\section{Supplementary Details about Experiments}
All data and code used in this work are publicly available at our GitHub repository: https://github.com/anjie-qiao/CSGD.

All experiments are trained and evaluated on a single NVIDIA A800-80GB-PCIE GPU.
We provide default configuration files and fixed random seeds to ensure that all reported results can be fully reproduced. Training \ours on each of these four datasets can be completed within 24 hours. The final checkpoint is used for evaluation.

\subsubsection{Datasets}
We evaluate \ours on four datasets: one for materials design (Polymers) and three for drug design (HIV, BACE, and BBBP). he Polymers dataset contains four numerical conditions. The first condition, synthesizability, combines two scores—Synthetic Accessibility Score (SAS) and Synthetic Complexity Score (SCS)—concatenated as a two-dimensional vector, e.g., [6.1, 2.29]. The remaining three conditions correspond to gas permeability ($\text{O}_2$, $\text{N}_2$, and $\text{CO}_2$), each represented by a single numerical value, e.g., 2.8, 1.5, and 15.4.

For the drug design datasets, each sample includes both a numerical and a categorical condition. The numerical condition is the same two-dimensional synthesizability vector used in Polymers. The categorical condition is a binary label indicating a task-specific property, e.g., for HIV, whether a molecule inhibits HIV replication ($1$ indicates inhibition, $0$ otherwise). All three drug design datasets are class-balanced.

Following GraphDiT, We randomly split each dataset into training, validation, and test sets with a 6:2:2 ratio. A fixed random seed is used to ensure that the splits are reproducible and do not affect result replication.

\subsubsection{Diffusion Details}
Our score-based diffusion model is built on the official SEDD implementation. Following SEDD, we use a log-linear noise schedule: $\bar{\sigma}(t) = -\log\bigl(1 - (1-\epsilon t))$, where $\epsilon = 10^{-5}$ is a small constant to ensure numerical stability as $t \to 1$.

We adopt the architecture and hyperparameters from GraphDiT, with only minor changes such as adjusting epochs and learning rate. Specifically, all models are trained with a batch size of 1200, a learning rate of $3\times 10^{-4}$, and gradient clipping with a maximum norm of 1.
A linear warm-up is applied for the first 1500 iterations.
The number of diffusion steps is set to 1000.

\subsubsection{Condition Encoders} 
\label{appendix:encoder}
To enable multi-conditional molecular generation, we employ separate encoders for categorical and numerical properties, following GraphDiT.

\medskip
\noindent\textbf{Categorical Encoder.} 
Categorical conditions (e.g., task-specific labels such as HIV inhibition) are embedded using a \textit{Categorical Encoder}. Each category $c \in \{0, \dots, K-1\}$ is mapped to a learnable vector $\mathbf{e}_c \in \mathbb{R}^{d_h}$:
\begin{align}
\mathbf{e}_c &= \mathrm{Embed}(c'), &
c' &=
\begin{cases} 
K, & \text{with probability } p_\mathrm{drop}, \\
c, & \text{otherwise},
\end{cases}
\end{align}
where $\mathrm{Embed}(\cdot)$ is a learnable table of size $(K+1) \times d_h$. Dropout of individual labels allows simultaneous learning of conditional and unconditional estimates.

\noindent\textbf{Cluster Encoder.} 
Numerical conditions $c \in \mathbb{R}^{d_\mathrm{in}}$ are embedded using a \textit{Cluster Encoder}. The embedding is given by
\begin{align}
\mathbf{e}_c &=
\begin{cases} 
\mathbf{e}_\varnothing, & \text{with probability } p_\mathrm{drop}, \\
\mathrm{MLP}(c), & \text{otherwise},
\end{cases} 
\end{align}
where $\mathrm{MLP}(\cdot)$ is a two-layer network with an intermediate $\mathrm{Softmax}$, and $\mathbf{e}_\varnothing$ is a learnable null embedding shared across dropped conditions.

These two encoders together allow diffusion model to flexibly incorporate arbitrary combinations of categorical and numerical conditions during both training and sampling.

\begin{figure}[t]
    \centering
    \label{case}
    \includegraphics[width=0.95\textwidth]{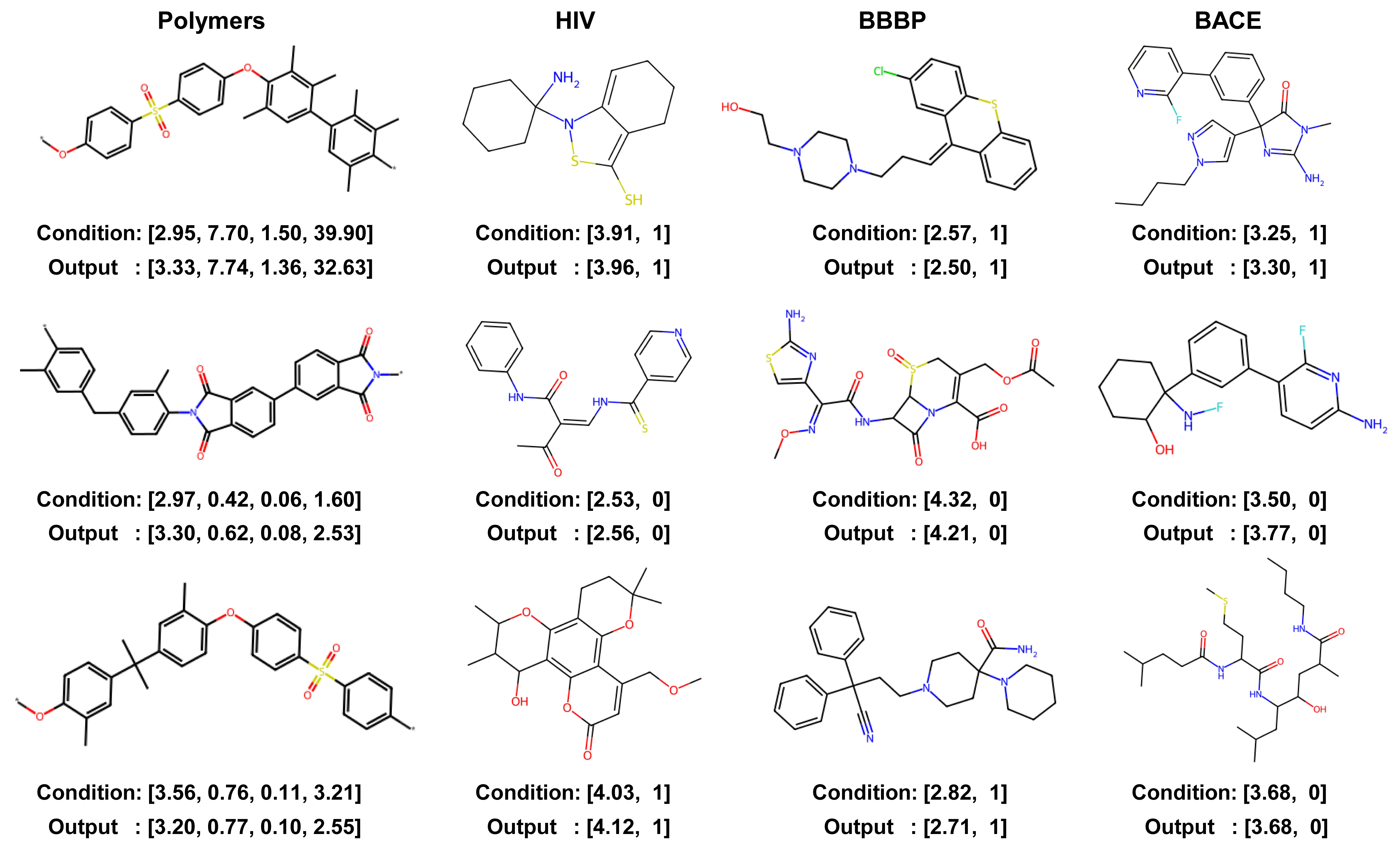}
    \caption{Generated molecular graphs across the four datasets. 
The \textbf{Condition} refer to the input properties, while the \textbf{Output} correspond to the properties of the generated molecules. 
For the Polymers dataset, the four properties are Synthesizability, and Gas Permeabilities of O$_2$, N$_2$, and CO$_2$. 
For the HIV, BBBP, and BACE datasets, the two properties are Synthesizability and a task-specific property (e.g., HIV inhibition).}
\end{figure}

\subsubsection{The Denoising Network.}
Our denoising network is built upon the GraphDiT architecture. Given a noisy molecular graph at timestep $t$, the node and edge features are first concatenated and projected into a hidden space $\textbf{H}$ via a linear layer. These representations are then processed by a stack of Graph Transformer~\cite{liu2024_graphdit} (GT) layers equipped with adaptive layer normalization~\cite{huang2017_adaln} (AdaLN), where the normalization parameters are modulated by the combined condition and timestep embeddings $\text{e}_y = \text{e}_c+\text{e}_t$. Formally, the hidden states $\mathbf{H}$ are updated as
$$\mathbf{H} = \mathrm{GT}(\mathbf{H}, \text{e}_y),$$ 
integrating conditional information. The final hidden states are passed through an MLP with AdaLN: $$\mathbf{H}_{\text{output}} = \mathrm{AdaLN}(\mathrm{MLP}(\mathbf{H}), \text{e}_y),$$ 
and then split into atom and bond components, $\mathbf{H}^\text{X}_{\text{output}}$ and $\mathbf{H}^\text{E}_{\text{output}}$, which correspond to the predicted node and edge log-concrete scores $s^X_{\theta}$ and $s^E_{\theta}$. This design allows the network to jointly reason about atom types and bond structures while flexibly incorporating both categorical and numerical conditions.

\subsection{Visualization}
We visualize samples generated by \ours across the four datasets, as shown in Figure~\ref{case}. 
These examples demonstrate that \ours is able to generate molecules that closely adhere to the the specified input multi-conditions.

\end{document}